\documentclass{article} 
\usepackage{arxiv_preprint,times}

\usepackage[utf8]{inputenc} 
\usepackage[T1]{fontenc}    
\usepackage{hyperref}       
\usepackage{url}            
\usepackage{booktabs}       
\usepackage{amsfonts}       
\usepackage{nicefrac}       
\usepackage{microtype}      
\usepackage{xcolor}         
\usepackage{amsmath}
\usepackage{amssymb}
\usepackage{amsthm}
\usepackage{enumitem}
\usepackage{multirow}
\usepackage{graphicx}
\usepackage{wrapfig}
\usepackage{subcaption}

\newtheorem{theorem}{Theorem}
\newtheorem{lemma}{Lemma}

\newtheorem{corollary}{Corollary}

\theoremstyle{definition}

\newtheorem{assumption}{Assumption}
\newtheorem{remark}{Remark}
\newtheorem{hypothesis}{Hypothesis}

\title{SynCo: Learning Cross-Modal Synergy by Contrasting Interaction Residuals}

\author{Yavuz Yarici \\
OLIVES, CSIP, School of ECE\\
Georgia Institute of Technology\\
Atlanta, GA, USA \\
\texttt{yavuzyarici@gatech.edu} \\
\And
Ghassan AlRegib \\
OLIVES, CSIP, School of ECE\\
Georgia Institute of Technology\\
Atlanta, GA, USA \\
\texttt{alregib@gatech.edu} \\
}

\begin{document}

\begin{titlepage}
\thispagestyle{empty}   
\vspace*{1in}

{\large
\begin{itemize}[leftmargin=2.5cm, align=parleft, labelsep=2cm, itemsep=4ex]

\item[\textbf{Citation}] Y. Yarici and G. AlRegib, ``SynCo: Learning Cross-Modal Synergy by Contrasting Interaction Residuals,'' submitted on September 25, 2026.

\item[\textbf{Review}] Under Review 

\item[\textbf{Codes}] Under Review 

\item[\textbf{Bib}] {\raggedright
    @misc\{yarici2026synco,\\
    title=\{SynCo: Learning Cross-Modal Synergy by Contrasting Interaction Residuals\},\\
    author=\{Yarici, Yavuz and AlRegib, Ghassan\},\\
    note=\{Submitted on September 25, 2026\},\\
    year=\{2026\}\}\par}

\item[\textbf{Contact}] \{yavuzyarici, alregib\}@gatech.edu \\
    \url{https://alregib.ece.gatech.edu/}

\end{itemize}
}

\vfill
\end{titlepage}

\maketitle

\begin{abstract}
Multimodal contrastive learning is a dominant paradigm for learning transferable representations from unlabeled data, but standard objectives primarily capture information that is redundant between modalities. Partial Information Decomposition (PID) shows that task-relevant information in multimodal data decomposes into three components: redundancy shared between modalities, uniqueness specific to each modality, and synergy available only from their joint observation. Recent frameworks extend contrastive learning to capture all three components, yet synergy remains undertrained in practice. We propose SynCo (Synergy Contrastive Learning), a method that directly addresses synergy undertraining through dedicated supervision on an interaction residual. SynCo fits a linear projector to predict the fused representation from independently computed unimodal features, and the resulting interaction residual, which removes the linearly unimodal-predictable component, receives dedicated contrastive supervision at negligible computational cost. On the controlled Trifeature benchmark, SynCo achieves state-of-the-art synergy capture with a $+5.98\%$ gain over the baseline, and on real-world benchmarks from MultiBench, DARai, and MM-IMDb, SynCo consistently outperforms or matches prior methods across diverse modality combinations and task types. The method operates as a plug-in to existing contrastive multimodal frameworks without modifying the underlying fusion architecture and can further improve synergy capture when combined with other methods.
\end{abstract}

\vspace{-2mm}
\section{Introduction}
\vspace{-2mm}


Multimodal learning plays a central role in high-stakes applications such as medical imaging~\citep{li2020domain, prabhushankar2022olives}, autonomous systems~\citep{kokilepersaud2023exploiting}, video understanding~\citep{kaviani2025hierarchical}, and scientific applications~\citep{orlyanchik2025co, massey2025earthscape}. Such domains benefit from leveraging complementary cues across modalities, producing richer representations than unimodal approaches. Complementary signals are particularly valuable in real-world settings where data variation is significant~\citep{Hendrycks_2021_ICCV, yarici2025subject}. However, capturing the cross-modal interactions that produce these representations remains a fundamental challenge in multimodal learning.

Multimodal contrastive learning has emerged as the dominant approach to this challenge. Methods such as CLIP~\citep{radford2021learning} and ALIGN~\citep{jia2021scaling} learn joint representations across modalities using contrastive objectives~\citep{oord2018representation, chen2020simple}. Contrastive methods achieve strong transfer across downstream tasks. However, these methods are built on the multi-view redundancy assumption, which holds that all task-relevant information is shared between modalities~\citep{tsai2021self, tian2020contrastive}. As a result, they capture only the information shared between modalities and miss the complementary modality-specific information~\citep{sridharan2008information}. The assumption is restrictive, since many real-world tasks depend on complementary information unique to individual modalities~\citep{baltruvsaitis2018multimodal, zong2024self}.

Partial Information Decomposition (PID)~\citep{williams2010nonnegative, bertschinger2014quantifying} provides a principled framework for characterizing the limitations of redundancy-based objectives. For two modalities $X_1$ and $X_2$ predicting a task $Y$, PID decomposes the total mutual information $I(X_1, X_2; Y)$ into three components: redundancy ($R$), information shared across modalities; uniqueness ($U_1$, $U_2$), information specific to each modality; and synergy ($S$), information that emerges only from their joint observation. Each component carries distinct task-relevant information, and capturing all components is necessary for learning representations that reflect the full structure of multimodal information.

Despite recent advances in multimodal contrastive learning, effectively capturing synergistic information remains challenging. FactorCL~\citep{liang2023factorized} explicitly separates shared and unique representations but relies on restrictive conditional augmentation assumptions and does not explicitly model synergy. CoMM~\citep{dufumier2024align} extends multimodal contrastive learning to capture redundancy, uniqueness, and synergy by maximizing mutual information between augmented multimodal representations alongside unimodal contrastive objectives. While redundancy and uniqueness are accessible from individual modalities, synergy requires their joint observation. Consequently, synergy can remain undertrained even when the contrastive objective theoretically preserves all three components. InfMasking~\citep{wen2025infmasking} addresses this challenge by stochastically masking modality features during fusion and aligning masked representations with unmasked ones through mutual information maximization. COrAL~\citep{cissee2026orthogonalized} promotes synergistic interactions through asymmetric masking while separating shared and modality-specific representations using orthogonality constraints. Both approaches encourage cross-modal interactions through masking-based training, without explicitly separating the linearly unimodal-predictable component from the fused representation.

We propose \textbf{SynCo} (\textbf{Syn}ergy \textbf{Co}ntrastive Learning) to address synergy undertraining in multimodal contrastive learning through dedicated supervision on the interaction residual. SynCo introduces a Synergy Head consisting of a linear projector that predicts the fused representation from unimodal representations and a residual operation that removes the linearly unimodal-predictable component. The resulting interaction residual is used to encourage synergy capture through a dedicated contrastive loss that complements the base objective. SynCo adds only two loss terms and a lightweight prediction head to the base framework, with negligible computational overhead in the evaluated setting and no modifications to the underlying fusion architecture.

\begin{itemize}[leftmargin=*, nosep, topsep=0pt, partopsep=0pt]
    \item We introduce a residual decomposition that removes the linearly unimodal-predictable component of multimodal representations. A linear projector captures the part of the fused representation that is linearly predictable from unimodal features, and the resulting interaction residual provides a direct target for synergy-focused supervision.
    \item We propose SynCo, a method that applies dedicated contrastive supervision on the interaction residual to enhance synergy capture. SynCo operates as a plug-in to existing multimodal contrastive frameworks and adds only two loss terms and a lightweight prediction head at negligible computational overhead.
    \item We demonstrate SynCo's effectiveness across multiple benchmarks. On the Trifeature benchmark, SynCo achieves state-of-the-art synergy capture, exceeding the baseline by $+5.98\%$. On real-world benchmarks from MultiBench, DARai, and MM-IMDb, SynCo consistently outperforms or matches prior methods across vision-language, sensor-based, and clinical domains.
\end{itemize}

\vspace{-1mm}
\section{Related Work}
\label{sec:related_work}
\vspace{-1mm}
\paragraph{Multimodal learning.}
Multimodal learning combines information from heterogeneous sources such as text, images, audio, and tabular data into a single model~\citep{baltruvsaitis2018multimodal, liang2024foundations}. Early methods combined modalities either at the input level through feature concatenation or at the decision level by aggregating unimodal predictions~\citep{snoek2005early, baltruvsaitis2018multimodal}. More recent work uses transformer architectures to model cross-modal interactions directly through attention~\citep{lu2019vilbert, xu2023multimodal}, enabling applications across tasks such as translation~\citep{xu2015show, rombach2022high} and representation learning~\citep{bengio2013representation}. Among these directions, self-supervised representation learning has emerged as a leading approach for learning general-purpose representations from unlabeled data through contrastive objectives~\citep{zong2024self, radford2021learning, jia2021scaling}. Our work targets this setting.
\vspace{-2mm}
\paragraph{Self-supervised multimodal representation learning.}
Self-supervised learning trains models using objectives defined directly on the input data, without requiring human labels~\citep{balestriero2023cookbook}. In the multimodal setting, these methods use cross-modal information to learn transferable representations~\citep{zong2024self}. Prior work can be organized by the type of pretext task. Clustering-based approaches use cluster assignments as pseudo-labels, allowing modalities to supervise one another~\citep{alwassel2020self}. Masked modeling methods reconstruct masked portions of the input, often in a cross-modal fashion where one modality predicts missing information from another~\citep{bachmann2022multimae, mizrahi20234m}. Contrastive methods align representations of matched multimodal pairs while pushing unmatched pairs apart~\citep{radford2021learning, jia2021scaling}. Our work builds on this contrastive line.

\paragraph{Contrastive multimodal interactions.}
Standard contrastive multimodal methods~\citep{radford2021learning, jia2021scaling} optimize cross-modal alignment but primarily capture redundant information shared across modalities. FactorCL~\citep{liang2023factorized} explicitly factorizes shared and unique information, but relies on restrictive conditional augmentation assumptions and does not model synergy. CoMM~\citep{dufumier2024align} grounds its objectives in Partial Information Decomposition~\citep{williams2010nonnegative, bertschinger2014quantifying}, combining multimodal and unimodal contrastive losses to capture redundancy, uniqueness, and synergy. In practice, however, synergy remains the most difficult component to learn, since it requires cross-modal computation. InfMasking~\citep{wen2025infmasking} addresses this by stochastically masking modality features during fusion, which encourages cross-modal computation through input perturbation but does not supervise synergy directly. COrAL~\citep{cissee2026orthogonalized} separates shared and modality-specific features through a dual-path architecture with orthogonality constraints and applies asymmetric masking with complementary patterns across views, which also promotes synergy through input perturbation. In contrast, our method targets synergy explicitly through a dedicated contrastive objective on the interaction residual.

\vspace{-1mm}
\section{Preliminaries}
\vspace{-1mm}

\label{sec:preliminaries}

We introduce the problem setup and Partial Information Decomposition, and review the multimodal contrastive learning framework introduced by CoMM~\citep{dufumier2024align}.

\paragraph{Problem setup.}
Let $X_1, X_2, \ldots, X_n$ be random variables representing $n$ data modalities and let $Y$ denote a target variable for a downstream task. The objective is to learn a multimodal representation $Z = f_\theta(X)$, where $X = (X_1, \ldots, X_n)$ and $\theta$ denotes the parameters of a multimodal encoder, such that $Z$ preserves all task-relevant information: $I(Z; Y) = I(X; Y)$. During pretraining, two augmentations $t'$ and $t''$ are drawn from a set $\mathcal{T}$ of multimodal augmentations, independently of each other and of $(X, Y)$, and applied to the same input. We write $X' = t'(X)$ and $X'' = t''(X)$ for the two views and $Z' = f_\theta(X')$ and $Z'' = f_\theta(X'')$ for their representations. The two views are identically distributed and differ only in the realized augmentation. For each modality $i$, the projection $t_i$ masks all modalities except the $i$-th, and $Z_i = f_\theta(t_i(X))$ denotes the corresponding unimodal representation, while $Z'_i = f_\theta(t_i(X'))$ and $Z''_i = f_\theta(t_i(X''))$ denote its counterparts for the two views. Since $Z$ and every $Z_i$ are outputs of the same encoder $f_\theta$, all representations lie in the same space $\mathbb{R}^D$, where $D$ is the output dimension of $f_\theta$.

\paragraph{Partial Information Decomposition.}
To characterize the structure of task-relevant information, we employ Partial Information Decomposition (PID)~\citep{williams2010nonnegative, bertschinger2014quantifying,liang2023quantifying}. Our theoretical analysis is presented for $n=2$ modalities, since multimodal interactions have not been characterized by PID for larger $n$~\citep{dufumier2024align}. For two modalities $X_1$ and $X_2$ and a target $Y$, PID decomposes the total task-relevant information $I(X_1, X_2; Y)$ into four non-negative terms:
\begin{equation}
    I(X_1, X_2; Y) = R + S + U_1 + U_2,
    \label{eq:pid}
\end{equation}
where $R$ denotes redundancy (information shared across modalities), $U_i$ denotes uniqueness (information specific to modality $i$), and $S$ denotes synergy (information available only from the joint observation of both modalities). This decomposition satisfies the consistency equations:
\begin{equation}
    I(X_1; Y) = R + U_1, \quad I(X_2; Y) = R + U_2, \quad I(X_1; X_2; Y) = R - S,
    \label{eq:consistency}
\end{equation}

\paragraph{Multimodal contrastive learning.}
In self-supervised learning, $Y$ remains unspecified during training, so task-relevant information cannot be optimized directly. CoMM~\citep{dufumier2024align} addresses this by assuming the existence of multimodal augmentations that preserve all task-relevant information, allowing it to be optimized indirectly through contrastive objectives between augmented views.

\begin{assumption}[Minimal label-preserving multimodal augmentations, \citealp{dufumier2024align}]
\label{ass:augmentation}
The set $\mathcal{T}$ of multimodal augmentations preserves task-relevant content: for any $t \in \mathcal{T}$ and $X' = t(X)$, $I(X; X') = I(X; Y)$ and $I(X'; Y) = I(X; Y)$.
\end{assumption}

The assumption states that augmenting $X$ to $X'$ preserves the information $X$ carries about the target $Y$, even though $Y$ itself is not used to construct the augmentation. Unlike standard contrastive assumptions, the preserved information is not required to be shared across modalities. The set $\mathcal{T}$ therefore includes augmentations that are jointly defined on $(X_1, X_2)$ rather than applied independently to each modality. Joint augmentations can remove content from one modality when the other modality retains it, which keeps $I(X'; Y) = I(X; Y)$. The first equality is the condition stated in CoMM, and the second equality states explicitly that the augmentation preserves the information about $Y$. Appendix~\ref{app:assumption} discusses Assumption~\ref{ass:augmentation} further.

Let $X' = t(X)$ for some $t \in \mathcal{T}$, and let $Z = f_\theta(X)$ and $Z' = f_\theta(X')$ denote the representations of the original and augmented inputs. Since the pair $(Z, Z')$ is a deterministic function of $(X, X')$, the data processing inequality yields $I(Z; Z') \leq I(X; X')$. Combined with Assumption~\ref{ass:augmentation}, this leads to the following result:

\begin{lemma}[Multimodal information preservation, \citealp{dufumier2024align}]
\label{lem:multimodal_mi}
Under sufficient expressivity of $f_\theta$, the optimal parameters $\theta^\star$ that maximize $I(Z_\theta; Z'_\theta)$ satisfy $I(Z_{\theta^\star}; Z'_{\theta^\star}) = I(X; X') = I(X; Y)$ and $I(Z_{\theta^\star}; Y) = I(X; Y)$.
\end{lemma}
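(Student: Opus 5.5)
The plan is to prove the two equalities in turn: first that the maximal value of $I(Z_\theta;Z'_\theta)$ equals $I(X;X')$, and then that this optimum transfers to $I(Z_{\theta^\star};Y)=I(X;Y)$.

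\textbf{Step 1: upper bound.} For every $\theta$, the pair $(Z_\theta,Z'_\theta)=(f_\theta(X),f_\theta(X'))$ is obtained by applying the same deterministic map to each coordinate of $(X,X')$. Applying the data processing inequality once in each argument gives
\begin{equation*}
I(Z_\theta;Z'_\theta)\le I(X;X')=I(X;Y),
\end{equation*}
where the equality is Assumption~\ref{ass:augmentation}.

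\textbf{Step 2: attainment of the bound.} I read ``sufficient expressivity'' as the existence of some $\theta_0$ for which $f_{\theta_0}$ loses no information. Concretely, I would require $f_{\theta_0}$ to be injective on the support of $X$ and of $X'$; these share a common domain, since both are inputs to the same encoder. For finite-cardinality variables injectivity already gives $I(f(X);f(X'))=I(X;X')$. For continuous variables I would instead ask directly that $I(f_{\theta_0}(X);f_{\theta_0}(X'))=I(X;X')$, treating this as the meaning of expressivity. With such a $\theta_0$, the supremum over $\theta$ equals $I(X;X')$. Any maximizer $\theta^\star$ therefore achieves
\begin{equation*}
I(Z_{\theta^\star};Z'_{\theta^\star})=I(X;X')=I(X;Y).
\end{equation*}

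\textbf{Step 3: transfer to $Y$.} I do not expect injectivity of $f_{\theta^\star}$ itself, since a maximizer may collapse nuisance variation. I would therefore argue via the Markov structure instead. Because $t'$ is drawn independently of $(X,Y)$, we have $Y - X - X'$, and hence also $Y - Z - Z'$ along the chain $Y-X-Z$ together with $X' \to Z'$. Thus
\begin{equation*}
I(Z_{\theta^\star};Y)\ge I(Z_{\theta^\star};X')\ge I(Z_{\theta^\star};Z'_{\theta^\star})=I(X;Y)
\end{equation*}
would follow if $I(Z;Y)\ge I(Z;X')$. That inequality is the crux. I plan to justify it from the minimality in Assumption~\ref{ass:augmentation}: $I(X;X')=I(X;Y)$ together with $I(X';Y)=I(X;Y)$ says that $X'$ carries exactly the label information shared with $X$. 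Formally, I would write
\begin{equation*}
I(Z;X') = I(Z;X',Y) - I(Z;Y\mid X') = I(Z;Y) + I(Z;X'\mid Y) - I(Z;Y\mid X').
\end{equation*}
I then need $I(Z;X'\mid Y)=0$. This holds because $I(X;X'\mid Y)=I(X;X')-I(X;Y)+I(X;Y\mid X')=I(X;Y\mid X')$ by the chain rule and the assumption, and $I(X;Y\mid X') = I(X;Y)-I(X';Y)$ by $Y-X-X'$, which is $0$ by the second equality. Data processing then passes $I(X;X'\mid Y)=0$ to $Z=f(X)$. The reverse bound $I(Z_{\theta^\star};Y)\le I(X;Y)$ is again data processing, which gives the claimed equality.

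The main obstacle is this last step: rigorously establishing the conditional independence $X\perp X'\mid Y$ from the stated assumption, and handling continuous variables carefully. If that fails, the fallback is to appeal directly to the corresponding argument in \citet{dufumier2024align}.
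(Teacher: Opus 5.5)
Your proposal is correct and follows the paper's proof essentially step for step. Both use the same data-processing upper bound, attainment by an encoder injective on the supports of $X$ and $X'$, and a squeeze based on $X \perp X' \mid Y$, which you re-derive inline with exactly the two chain-rule expansions the paper packages as Lemma~\ref{lem:ass_equiv}; your detour through $I(Z;X')$ is just the paper's chain $Z \to Y \to Z'$ split into two data-processing steps. One small slip: your aside that $Y - Z - Z'$ is Markov is false in general, since a lossy $f$ (e.g.\ one that reveals $Y$ only for some nuisance values) can leave $Z'$ informative about $Y$ given $Z$, but your inequality chain only uses $Z - X' - Z'$ and $Z - Y - X'$, so nothing depends on it.
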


Since $I(X; Y) = I(X_1, X_2; Y) = R + S + U_1 + U_2$ by Eq.~\eqref{eq:pid}, Lemma~\ref{lem:multimodal_mi} implies that maximizing mutual information between augmented multimodal representations captures all PID components. The two augmented views $X'$ and $X''$ satisfy the same conditions, and the representation at the optimum preserves the task-relevant information of $X$ (Lemma~\ref{lem:two_views}, Appendix~\ref{appendix:proofs}). A corresponding result holds for the unimodal case when a single modality is used as the input. Under Assumption~\ref{ass:augmentation}, the augmented view preserves the information that modality $i$ carries about $Y$, and the unimodal contrastive terms reach this quantity at the optimum.

\begin{lemma}[Unimodal information preservation, \citealp{dufumier2024align}]
\label{lem:unimodal_mi}
Let $\theta^\star$ be the optimal parameters satisfying Lemma~\ref{lem:multimodal_mi}, which under sufficient expressivity of $f_\theta$ also maximize $I(Z_i; Z')$. For the projection $t_i$ that selects modality $i$, the corresponding unimodal representation $Z_i = f_{\theta^\star}(t_i(X))$ satisfies
\begin{equation}
I(Z_i; Z') = I(Z_i; Y) = I(X_i; Y) = R + U_i.
\end{equation}
The equalities follow from $I(X_i; X') = I(X_i; Y)$ under Assumption~\ref{ass:augmentation} and the data processing inequality applied to the chains $Y \to X_i \to Z_i$ and $Z_i \to Y \to Z'$, together with the sufficient expressivity of $f_\theta$ on the unimodal input. The same result holds with $Z''$ in place of $Z'$ (Appendix~\ref{appendix:proofs}).
\end{lemma}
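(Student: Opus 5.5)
The argument is a chain of inequalities in which each link is either an application of the data processing inequality or an achievability claim from sufficient expressivity; the outer quantities coincide, which forces all intermediate quantities to be equal.

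\emph{Upper bounds.} Since $Z_i = f_{\theta^\star}(t_i(X))$ is a deterministic function of $X_i$, the chain $Y \to X_i \to Z_i$ gives $I(Z_i; Y) \le I(X_i; Y)$. For the quantity that is actually optimized, note that $Z_i$ depends only on $X$ and $Z' = f_{\theta^\star}(t'(X))$ with $t'$ drawn independently of $X$. This gives $I(Z_i; Z') \le I(X_i; X')$ by data processing applied to both arguments. I then want $I(X_i; X') = I(X_i; Y)$, which is the step I expect to be the main obstacle. The paper asserts it follows from Assumption~\ref{ass:augmentation}, so I would try the following. From $I(X;X') = I(X;Y)$ and $I(X';Y) = I(X;Y)$, read $X'$ as a label-sufficient, label-minimal view: conditionally on $Y$, the augmentation carries no further information about $X$. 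This yields the Markov chain $X \to Y \to X'$ (minimality), and together with sufficiency it yields $I(X_i; X') = I(X_i; Y)$. Concretely, $I(X_i;X') \le I(X_i;Y)$ follows from $X_i \to Y \to X'$. The reverse inequality follows from $X_i \to X \to X'$ combined with $I(X';Y) = I(X;Y)$ and $Y \to X \to X'$, which make $X'$ sufficient for $Y$.

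\emph{Middle link.} Given that chain, $Z_i \to Y \to Z'$ holds because $Z_i$ is a function of $X_i$ and $Z'$ is a function of $X'$. Hence $I(Z_i; Z') \le I(Z_i; Y)$. Collecting the bounds,
\begin{equation*}
I(Z_i; Z') \le I(Z_i; Y) \le I(X_i; Y) = I(X_i; X').
\end{equation*}

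\emph{Tightness.} By sufficient expressivity of $f_\theta$ on unimodal inputs, some parameter attains $I(Z_i; Z') = I(X_i; X')$; for example, an encoder that is injective on the relevant statistics of $t_i(X)$ while Lemma~\ref{lem:multimodal_mi} still holds. By hypothesis $\theta^\star$ also maximizes $I(Z_i;Z')$, so the left end equals the right end and every inequality in the chain is an equality. This gives $I(Z_i;Z') = I(Z_i;Y) = I(X_i;Y)$. The consistency equation~\eqref{eq:consistency} then yields $I(X_i;Y) = R + U_i$.

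Finally, $Z''$ has the same distribution as $Z'$, and $t''$ is drawn independently of $X$, so the identical argument with $Z''$ in place of $Z'$ gives the stated counterpart.
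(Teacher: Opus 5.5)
Your proposal is correct and follows the paper's proof essentially step for step. You transfer the conditional independences $X' \perp X \mid Y$ and $Y \perp X \mid X'$ from $X$ to $X_i$ to get $I(X_i;X') = I(X_i;Y)$, bound $I(Z_i;Z')$ by the data processing inequality along $Y \to X_i \to Z_i$ and $Z_i \to Y \to Z'$, and close the chain by expressivity. Your two-sided sandwich via $X_i \to Y \to X'$ and $X_i \to X' \to Y$ is equivalent to the paper's expansion of $I(X_i; X', Y)$ in two orders. The one step you only assert is that $I(X;X') = I(X;Y)$ gives $X' \perp X \mid Y$; it follows from the same expansion applied to $X$ once $I(X;Y \mid X') = 0$ is established, which is what the paper does in Lemma~\ref{lem:ass_equiv}.
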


CoMM~\citep{dufumier2024align} introduces Lemmas~\ref{lem:multimodal_mi} and~\ref{lem:unimodal_mi} and uses them to learn task-agnostic representations that capture redundancy, uniqueness, and synergy. Mutual information is estimated using the InfoNCE lower bound~\citep{oord2018representation} with temperature $\tau$, where $\hat{I}_{\mathrm{NCE}}(A, B)$ denotes the InfoNCE estimate of $I(A; B)$, giving the multimodal contrastive objective:

\begin{equation}
    \mathcal{L}_{\text{MCL}} = -\hat{I}_{\text{NCE}}(Z', Z'') - \tfrac{1}{2}\sum_{i=1}^{n}\!\left[\hat{I}_{\text{NCE}}(Z_i, Z') + \hat{I}_{\text{NCE}}(Z_i, Z'')\right].
    \label{eq:comm}
\end{equation}
At the theoretical optimum, the first term captures $R + S + \sum_i U_i$, and the second term captures $\sum_i (R + U_i)$ in the bimodal setting (Lemmas~\ref{lem:multimodal_mi} and~\ref{lem:unimodal_mi}). Although the first term theoretically captures synergy, learning synergistic interactions remains challenging in practice because they require cross-modal computation.

\vspace{-1mm}
\section{Synergy Contrastive Learning}
\vspace{-1mm}

\label{sec:method}

\subsection{The SynCo Framework}
\vspace{-1mm}

\label{sec:framework}

We propose SynCo (Synergy Contrastive Learning), a framework that addresses synergy undertraining by introducing dedicated supervision for synergistic interactions in fused multimodal representations. SynCo introduces a Synergy Head consisting of a linear projector $P$ followed by a residual operation. The projector predicts the fused representation from the unimodal representations, and the component of the fused representation that the projector cannot predict forms the interaction residual. The interaction residual removes the linearly unimodal-predictable component while retaining information not captured by the linear prediction. A contrastive loss is applied to the residual to encourage synergy capture. The overall pipeline is illustrated in Figure~\ref{fig:framework}.

\begin{figure}[t]
    \centering
    \vspace{-2mm}
    \includegraphics[width=0.9\linewidth]{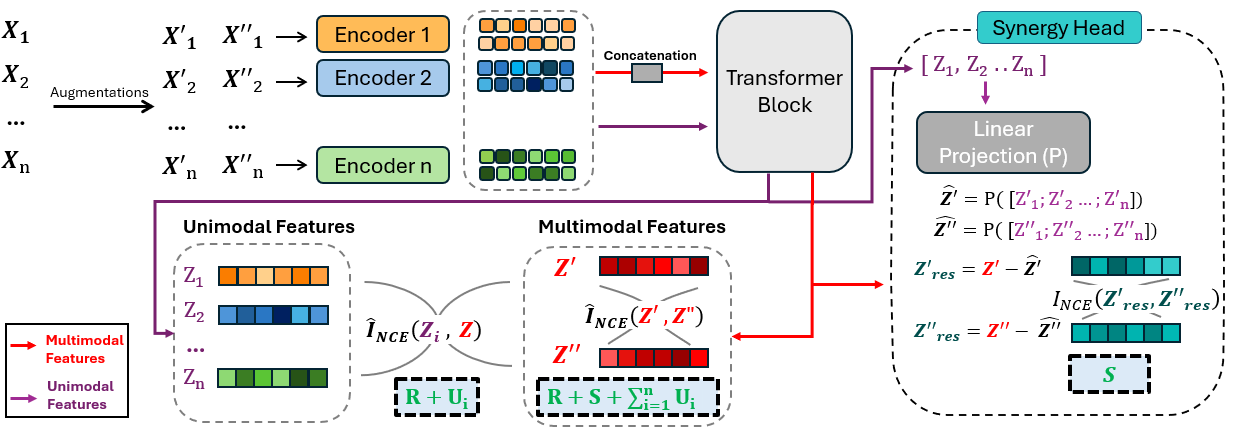}
    \vspace{-1mm}
\caption{Overview of SynCo. Two augmented views are encoded and processed through a multimodal fusion path producing $Z'$ and $Z''$, and a unimodal path producing $Z_1, \ldots, Z_n$. Three contrastive objectives target distinct PID components: $\hat{I}_{\mathrm{NCE}}(Z', Z'')$ captures $R + S + \sum_i U_i$, $\hat{I}_{\mathrm{NCE}}(Z_i, Z')$ and $\hat{I}_{\mathrm{NCE}}(Z_i, Z'')$ capture $R + U_i$, and $\hat{I}_{\mathrm{NCE}}(Z'_{\mathrm{res}}, Z''_{\mathrm{res}})$ provides synergy-focused supervision on the interaction residuals.}
    \label{fig:framework}
    \vspace{-4mm}
\end{figure}

SynCo extends the multimodal contrastive objective $\mathcal{L}_{\text{MCL}}$ of Eq.~\eqref{eq:comm} with two additional terms that operate on the Synergy Head: a contrastive loss $\mathcal{L}_{\text{interaction}}$ on the interaction residuals from the two augmented views, which provides synergy-focused supervision, and a mean-squared-error loss $\mathcal{L}_{\text{pred}}$ that trains the linear projector $P$. The complete SynCo training objective is:

\vspace{-1mm}
\begin{equation}
    \mathcal{L}_{\text{SynCo}} = \mathcal{L}_{\text{MCL}} + \underbrace{\lambda_{\text{int}}\, \mathcal{L}_{\text{interaction}} + \lambda_{\text{pred}}\, \mathcal{L}_{\text{pred}}}_{\text{synergy-targeted terms}},
    \label{eq:synco_total}
\end{equation}

where $\lambda_{\text{int}}$ and $\lambda_{\text{pred}}$ are the weighting coefficients. The theoretical foundation supporting this claim is presented in Section~\ref{sec:theory}.

\subsection{Theoretical Foundation}
\label{sec:theory}


We show that the residual $Z_{\mathrm{res}}^\star$ provides a target for synergy-focused supervision by removing the linearly unimodal-predictable content while retaining additional task-relevant information given the linear prediction. The argument combines an empirically supported hypothesis about the trained encoder with an information-theoretic decomposition.

Let $P: \mathbb{R}^{2D} \to \mathbb{R}^{D}$ be a linear projector trained by mean-squared error to predict the fused multimodal representation $Z$ from the unimodal representations $[Z_1; Z_2]$. We denote the prediction by $\hat{Z} = P([Z_1; Z_2])$ and define the residual $Z_{\mathrm{res}} = Z - \hat{Z}$ as the component of $Z$ not captured by the prediction. We write $P^\star$ for the MSE-optimal projector and $\hat{Z}^\star, Z_{\mathrm{res}}^\star$ for the corresponding prediction and residual.

\begin{hypothesis}[Synergy is not linearly recoverable from unimodal representations]
\label{hyp:synergy}
For the trained encoder $f_\theta$ and the MSE-optimal projector $P^\star$, the linear prediction $\hat{Z}^\star = P^\star([Z_1; Z_2])$ satisfies $I(\hat{Z}^\star; Y) \leq R + U_1 + U_2$.
\end{hypothesis}
The hypothesis bounds the task-relevant information retained by the linear prediction of the fused representation from unimodal representations. We assess it empirically with linear probes in Section~\ref{sec:trifeature}.

To combine this hypothesis with the structure of mutual information, we first establish how task-relevant information is distributed between the prediction $\hat{Z}$ and the residual $Z_{\mathrm{res}}$.

\begin{lemma}[Information decomposition of the residual]
\label{lem:info_decomp}
For any encoder $f_\theta$, any $\hat{Z} = P([Z_1; Z_2])$, and $Z_{\mathrm{res}} = Z - \hat{Z}$,
\begin{equation}
    I(Z_{\mathrm{res}}; Y \mid \hat{Z}) \geq I(Z; Y) - I(\hat{Z}; Y),
    \label{eq:info_decomp}
\end{equation}
with equality when $I(Z; Y) = I(X; Y)$.
\end{lemma}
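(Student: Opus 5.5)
The plan is to derive the statement from the chain rule for mutual information together with the observation that the pair $(Z_{\mathrm{res}}, \hat{Z})$ carries exactly the same information as the pair $(Z, \hat{Z})$.

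\textbf{Step 1: chain rule.} By the chain rule,
\begin{equation*}
I(Z_{\mathrm{res}}; Y \mid \hat{Z}) = I(Z_{\mathrm{res}}, \hat{Z}; Y) - I(\hat{Z}; Y).
\end{equation*}
The map $(Z_{\mathrm{res}}, \hat{Z}) \mapsto (Z_{\mathrm{res}} + \hat{Z}, \hat{Z}) = (Z, \hat{Z})$ is a bijection, with inverse $(Z, \hat{Z}) \mapsto (Z - \hat{Z}, \hat{Z})$. Mutual information is invariant under bijective reparametrizations of one argument, so
\begin{equation*}
I(Z_{\mathrm{res}}, \hat{Z}; Y) = I(Z, \hat{Z}; Y).
\end{equation*}
This gives the exact identity
\begin{equation*}
I(Z_{\mathrm{res}}; Y \mid \hat{Z}) = I(Z, \hat{Z}; Y) - I(\hat{Z}; Y).
\end{equation*}

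\textbf{Step 2: the inequality.} Applying the chain rule once more,
\begin{equation*}
I(Z, \hat{Z}; Y) = I(Z; Y) + I(\hat{Z}; Y \mid Z) \geq I(Z; Y),
\end{equation*}
by nonnegativity of conditional mutual information. Substituting into the identity of Step 1 yields Eq.~\eqref{eq:info_decomp}. This step uses nothing about $P$ or $f_\theta$, consistent with the ``for any encoder, any $\hat{Z}$'' quantifier in the statement.

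\textbf{Step 3: the equality case.} The key observation is that $(Z, \hat{Z})$ is a deterministic function of $X$:
\begin{itemize}
\item $Z = f_\theta(X)$;
\item $Z_i = f_\theta(t_i(X))$, where $t_i$ is a fixed masking projection;
\item $\hat{Z} = P([Z_1; Z_2])$ with $P$ fixed.
\end{itemize}
The data processing inequality along the chain $Y \to X \to (Z, \hat{Z})$ then gives $I(Z, \hat{Z}; Y) \leq I(X; Y)$. If $I(Z; Y) = I(X; Y)$, this upper bound combines with Step 2 to give
\begin{equation*}
I(Z; Y) \leq I(Z, \hat{Z}; Y) \leq I(X; Y) = I(Z; Y),
\end{equation*}
so all three quantities are equal and Eq.~\eqref{eq:info_decomp} holds with equality.

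\textbf{Main obstacle.} The only delicate point is making sure that $Z$, $Z_1$ and $Z_2$ are evaluated on the same input $X$ (or the same augmented view), with $t_i$ and $P$ deterministic. Otherwise $\hat{Z}$ need not be a function of the variable on which $Z$ depends, and the data processing step in Step 3 fails. The residual is formed within a single view, so this condition holds. If augmentations are random, I would condition on the realized augmentation, or treat $X'$ in place of $X$ and invoke $I(X'; Y) = I(X; Y)$ from Assumption~\ref{ass:augmentation}.
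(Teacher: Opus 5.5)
Your proof is correct and follows the paper's argument essentially step for step. The paper also identifies $I(Z_{\mathrm{res}}; Y \mid \hat{Z})$ with $I(Z; Y \mid \hat{Z})$ through the invertible shift by $\hat{Z}$; you phrase this as a bijection of the joint pair $(Z_{\mathrm{res}}, \hat{Z}) \leftrightarrow (Z, \hat{Z})$, which is equivalent. Like the paper, you then lower-bound $I(Z, \hat{Z}; Y)$ by $I(Z; Y)$ via the chain rule, and get the equality case from the data processing inequality because $(Z, \hat{Z})$ is a deterministic function of $X$.
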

\begin{proof}
Given $\hat{Z}$, the residual determines $Z$ via $Z = \hat{Z} + Z_{\mathrm{res}}$, so $I(Z_{\mathrm{res}}; Y \mid \hat{Z}) = I(Z; Y \mid \hat{Z})$. The chain rule gives $I(Z, \hat{Z}; Y) = I(\hat{Z}; Y) + I(Z; Y \mid \hat{Z})$, and $I(Z, \hat{Z}; Y) \geq I(Z; Y)$ yields the inequality. Both $Z$ and $\hat{Z}$ are deterministic functions of $X$, so by the data processing inequality $I(Z, \hat{Z}; Y) \leq I(X; Y)$. When $I(Z; Y) = I(X; Y)$, the two bounds give $I(Z, \hat{Z}; Y) = I(Z; Y)$, and the inequality holds with equality.
\end{proof}

\begin{theorem}[Residual concentrates synergistic information]
\label{thm:residual}
Let $Z = f_\theta(X_1, X_2)$ for any encoder $f_\theta$, and let $\Delta = I(X; Y) - I(Z; Y) \geq 0$ denote the task-relevant information that $Z$ does not capture. If Hypothesis~\ref{hyp:synergy} holds for $f_\theta$, then the residual $Z_{\mathrm{res}}^\star = Z - \hat{Z}^\star$ satisfies
\begin{equation}
    I(Z_{\mathrm{res}}^\star; Y \mid \hat{Z}^\star) \geq S - \Delta.
    \label{eq:residual_bound}
\end{equation}
In particular, when $\Delta = 0$, the residual satisfies $I(Z_{\mathrm{res}}^\star; Y \mid \hat{Z}^\star) \geq S$, providing at least $S$ additional task-relevant information given the linear prediction.
\end{theorem}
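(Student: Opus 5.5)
The plan is to chain Lemma~\ref{lem:info_decomp} with Hypothesis~\ref{hyp:synergy} and the PID identity in Eq.~\eqref{eq:pid}. Throughout, we specialize to the MSE-optimal projector $P^\star$, so $\hat{Z} = \hat{Z}^\star$ and $Z_{\mathrm{res}} = Z_{\mathrm{res}}^\star$. Lemma~\ref{lem:info_decomp} holds for any linear $P$ and any encoder, so it applies directly and gives
\begin{equation*}
I(Z_{\mathrm{res}}^\star; Y \mid \hat{Z}^\star) \geq I(Z; Y) - I(\hat{Z}^\star; Y).
\end{equation*}

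Next we rewrite the first term using the definition of $\Delta$. Since $I(Z;Y) = I(X;Y) - \Delta$ and $X = (X_1, X_2)$, Eq.~\eqref{eq:pid} yields $I(Z;Y) = R + S + U_1 + U_2 - \Delta$. The fact that $\Delta \geq 0$ follows from the data processing inequality, because $Z$ is a deterministic function of $X$; we do not actually need it for the inequality, only for interpreting the bound.

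For the second term we invoke Hypothesis~\ref{hyp:synergy}, which holds for $f_\theta$ by assumption: $I(\hat{Z}^\star; Y) \leq R + U_1 + U_2$. Subtracting, the redundancy and uniqueness terms cancel and leave $I(Z_{\mathrm{res}}^\star; Y \mid \hat{Z}^\star) \geq S - \Delta$. For the special case, setting $\Delta = 0$ gives the bound $\geq S$. When $\Delta = 0$ we also have $I(Z;Y) = I(X;Y)$, so by the equality case of Lemma~\ref{lem:info_decomp} the first inequality is tight, and the conditional information equals $R+S+U_1+U_2 - I(\hat{Z}^\star;Y)$ exactly.

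There is no substantial obstacle, since all the work is done by the lemma and the hypothesis. The only step needing care is checking that the lemma's hypotheses ("any $\hat{Z} = P([Z_1;Z_2])$") cover $P^\star$. It does, because $P^\star$ is just a particular linear map, and its MSE-optimality is used only through the hypothesis. I would also note explicitly that the direction of the hypothesis's inequality is exactly the one needed for the subtraction.
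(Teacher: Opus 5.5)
Your proposal is correct and follows the paper's proof essentially step for step. You apply Lemma~\ref{lem:info_decomp} to $P^\star$, substitute $I(Z;Y) = R + S + U_1 + U_2 - \Delta$ from Eq.~\eqref{eq:pid}, and bound $I(\hat{Z}^\star;Y)$ using Hypothesis~\ref{hyp:synergy}; your extra remark that the lemma's inequality is tight when $\Delta = 0$ is also correct, though the paper does not need it.
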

\begin{proof}
By Lemma~\ref{lem:info_decomp} and Hypothesis~\ref{hyp:synergy}, $I(Z_{\mathrm{res}}^\star; Y \mid \hat{Z}^\star) \geq I(Z; Y) - I(\hat{Z}^\star; Y) \geq I(Z; Y) - (R + U_1 + U_2)$. Substituting $I(Z; Y) = I(X; Y) - \Delta = R + S + U_1 + U_2 - \Delta$ from Eq.~\eqref{eq:pid} gives the lower bound, which reduces to $S$ when $\Delta = 0$.
\end{proof}

\begin{remark}
\label{rem:scope}
Theorem~\ref{thm:residual} holds for any encoder that satisfies Hypothesis~\ref{hyp:synergy}. At the optimum of $\mathcal{L}_{\text{MCL}}$, Lemma~\ref{lem:multimodal_mi} and its extension to two augmented views (Lemma~\ref{lem:two_views}) give $\Delta = 0$. The linear prediction then carries at most $R + U_1 + U_2$ of the task-relevant information, while the residual provides at least $S$ additional task-relevant information given the linear prediction. We refer to this conditional information bound as concentration of synergistic information. The linear projector removes the linearly unimodal-predictable component of the fused representation, while nonlinear $R + U$ content beyond the reach of the projector may remain in $Z_{\mathrm{res}}^\star$.
\end{remark}

\begin{corollary}[Synergy-targeted contrastive learning]
\label{cor:synergy_contrastive}
Let $Z'_{\mathrm{res}}$ and $Z''_{\mathrm{res}}$ be the residuals from two augmented views of the same sample. Under Assumption~\ref{ass:augmentation} and Hypothesis~\ref{hyp:synergy}, the contrastive objective $\hat{I}_{\mathrm{NCE}}(Z'_{\mathrm{res}}, Z''_{\mathrm{res}})$ rewards task-relevant information shared between the interaction residuals, providing a basis for synergy-focused supervision of the multimodal representation.
\end{corollary}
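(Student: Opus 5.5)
Here is the plan. We first turn the qualitative statement into a quantitative claim about the mutual information between the two residuals, and then use the InfoNCE bound to relate the objective to it. Concretely, we show that at the optimum of $\mathcal{L}_{\text{MCL}}$ the residuals of the two views share at least the synergy-relevant content certified by Theorem~\ref{thm:residual}. InfoNCE is a lower bound on $I(Z'_{\mathrm{res}}; Z''_{\mathrm{res}})$, so maximizing $\hat{I}_{\mathrm{NCE}}(Z'_{\mathrm{res}}, Z''_{\mathrm{res}})$ pushes the encoder toward residuals that carry this shared content.

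\textbf{Step 1: apply the theorem to each view.} We apply Theorem~\ref{thm:residual} to each augmented view separately. By Assumption~\ref{ass:augmentation}, $I(X';Y)=I(X'';Y)=I(X;Y)$. At the optimum, Lemma~\ref{lem:two_views} gives $\Delta=0$ for both $Z'$ and $Z''$. Hypothesis~\ref{hyp:synergy}, applied to the same encoder and projector $P^\star$, then yields
\[
I(Z'_{\mathrm{res}}; Y \mid \hat{Z}'^\star)\ge S
\quad\text{and}\quad
I(Z''_{\mathrm{res}}; Y \mid \hat{Z}''^\star)\ge S .
\]
The two views are identically distributed, so a single $P^\star$ serves both.

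\textbf{Step 2: link the views through $Y$.} The augmentations are drawn independently of each other and of $(X,Y)$. We would argue that the views communicate their task-relevant content through $Y$, so that $Z'_{\mathrm{res}}$ and $Z''_{\mathrm{res}}$ each carry $Y$-information that is not linearly predictable. The natural chain is $Z'_{\mathrm{res}} \to X \to Z''_{\mathrm{res}}$. Following the style of Lemma~\ref{lem:unimodal_mi}, which uses the chain $Z_i\to Y\to Z'$, we would invoke $I(X;X')=I(X;Y)$ to treat $Y$ as the sufficient statistic linking the views. This should give
\[
I(Z'_{\mathrm{res}}; Z''_{\mathrm{res}})\ \ge\ I(Z'_{\mathrm{res}}; Y),
\]
or a conditional analogue given the predictions, with the lower bound traced back to the $S$ bound of Step 1.

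\textbf{Step 3 (main obstacle): pass from conditional to unconditional information.} Step 1 gives information conditioned on $\hat{Z}^\star$, but InfoNCE estimates the unconditional $I(Z'_{\mathrm{res}};Z''_{\mathrm{res}})$. Conditional information can exceed unconditional information: for an XOR-type synergy, the residual alone may carry nothing about $Y$. For this reason we expect only the weaker "rewards" statement to be provable.

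The approach is as follows. The InfoNCE objective is maximized jointly with $\mathcal{L}_{\text{MCL}}$, which already fixes $\hat{Z}^\star$'s content. The residual pair is a deterministic function of $(Z',Z'',\hat Z'^\star,\hat Z''^\star)$, so the Step 1 bound can be read as the residual contributing at least $S$ beyond the linear prediction. We would then phrase the corollary as a consequence rather than a sharp bound: any increase of $I(Z'_{\mathrm{res}};Z''_{\mathrm{res}})$ corresponds to shared content outside the linear span of the unimodal features. By Hypothesis~\ref{hyp:synergy}, that content is where synergy resides. This gives the qualitative justification the corollary asserts.
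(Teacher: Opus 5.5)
\textbf{Gap in Step 2.} Your Step 1 matches the paper. Step 2 fails: the inequality $I(Z'_{\mathrm{res}};Z''_{\mathrm{res}})\ge I(Z'_{\mathrm{res}};Y)$ points the wrong way and is false in general. The data processing inequality only gives upper bounds. Your chain $Z'_{\mathrm{res}}\to X\to Z''_{\mathrm{res}}$ yields $I(Z'_{\mathrm{res}};Z''_{\mathrm{res}})\le I(Z'_{\mathrm{res}};X)$. The chain $Z_i\to Y\to Z'$ in Lemma~\ref{lem:unimodal_mi} likewise gives only $I(Z_i;Z')\le I(Z_i;Y)$. The equality in that lemma comes from maximizing $I(Z_i;Z')$ over $\theta$ under sufficient expressivity. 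No such tightness is available for the residual, which is a constrained function of the view.

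Lemma~\ref{lem:two_views} does give $I(X';X'')=I(X';Y)$, but only because $X''$ screens off $Y$ from $X'$, i.e.\ $I(X';Y\mid X'')=0$. That screening property is lost once you pass to functions of the views. For a counterexample, let $X=(Y,N)$ and let the augmentation resample the nuisance bit, so $X'=(Y,N')$ and $X''=(Y,N'')$ with $Y$ uniform and $N',N''\sim\mathrm{Bern}(\epsilon)$ i.i.d. This satisfies Assumption~\ref{ass:augmentation}. Take $g(X')=Y\oplus N'$. Then
\[
I(g(X');g(X''))=1-h(2\epsilon(1-\epsilon))<1-h(\epsilon)=I(g(X');Y)
\]
for $0<\epsilon<1/2$, where $h$ is the binary entropy function. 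Your argument uses nothing that rules this case out.

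\textbf{The missing ingredient.} The paper uses the conditional independence $X'\perp X''\mid Y$ from Lemma~\ref{lem:two_views}. Each residual is a deterministic function of its own view, e.g.\ $Z'_{\mathrm{res}}=f_{\theta^\star}(X')-P^\star([f_{\theta^\star}(t_1(X'));f_{\theta^\star}(t_2(X'))])$. Hence $Z'_{\mathrm{res}}\perp Z''_{\mathrm{res}}\mid Y$, and the data processing inequality on $Z'_{\mathrm{res}}\to Y\to Z''_{\mathrm{res}}$ gives $I(Z'_{\mathrm{res}};Z''_{\mathrm{res}})\le\min\{I(Z'_{\mathrm{res}};Y),I(Z''_{\mathrm{res}};Y)\}$. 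This upper bound is what ``rewards task-relevant information'' means: everything InfoNCE can reward between the residuals is information about $Y$.

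The paper then places this bound next to your Step 1 bound (at least $S$ given $\hat{Z}^\star$) and concludes qualitatively. It does not resolve the conditional-versus-unconditional gap you raise in Step 3, so that concern is legitimate and the corollary is only motivational.

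Your closing claim that the rewarded content ``is where synergy resides'' overreaches. Hypothesis~\ref{hyp:synergy} only bounds $I(\hat{Z}^\star;Y)$. Remark~\ref{rem:scope} explicitly allows nonlinear $R+U$ content in the residual, and Table~\ref{tab:decomp} shows it is present. The defensible conclusion is narrower: the shared content is task-relevant and lies in the component left after subtracting the linear prediction.
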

\begin{proof}
Assumption~\ref{ass:augmentation} preserves the total task-relevant information $I(X; Y)$ across both augmented views. Theorem~\ref{thm:residual} applied to each view, with $\Delta = 0$ at the information-preserving optimum (Lemma~\ref{lem:two_views}), shows that each interaction residual provides at least $S$ additional task-relevant information given its linear prediction, which captures the linearly unimodal-predictable component of the fused representation. The InfoNCE objective $\hat{I}_{\mathrm{NCE}}(Z'_{\mathrm{res}}, Z''_{\mathrm{res}})$ is a lower bound on $I(Z'_{\mathrm{res}}; Z''_{\mathrm{res}})$ and rewards information shared between the two residuals. Since the two views are conditionally independent given $Y$ (Lemma~\ref{lem:two_views}), the residuals are also conditionally independent given $Y$. The data processing inequality therefore gives
\begin{equation}
    I(Z'_{\mathrm{res}}; Z''_{\mathrm{res}})
    \leq
    \min\left\{
    I(Z'_{\mathrm{res}}; Y),
    I(Z''_{\mathrm{res}}; Y)
    \right\}.
\end{equation}
Thus, the information shared between the interaction residuals is task-relevant under Assumption~\ref{ass:augmentation}. Applying InfoNCE to these residuals encourages the encoder to preserve shared task-relevant information in the component remaining after subtraction of the linear prediction, providing the motivation for synergy-focused supervision.
\end{proof}


\subsection{Implementation}
\vspace{-1mm}
\label{sec:implementation}

We apply the theoretical setup of Section~\ref{sec:theory} to two augmented views, where $D$ denotes the dimensionality of the fusion output and $n$ the number of modalities. The MSE objective from Section~\ref{sec:theory} is written as
\begin{equation}
    \mathcal{L}_{\text{pred}} = \frac{1}{2}\left[\left\|Z' - P\!\left([Z'_1; Z'_2]\right)\right\|^2 + \left\|Z'' - P\!\left([Z''_1; Z''_2]\right)\right\|^2\right].
    \label{eq:loss_pred}
\end{equation}

The interaction residual is obtained by subtracting $P$'s prediction from the fused representation:
\begin{equation}
    Z'_{\mathrm{res}} = Z' - \hat{Z}', \quad Z''_{\mathrm{res}} = Z'' - \hat{Z}'', \quad \text{where} \quad \hat{Z}' = P\!\left([Z'_1; Z'_2]\right), \; \hat{Z}'' = P\!\left([Z''_1; Z''_2]\right).
    \label{eq:residual_compute}
\end{equation}
The interaction contrastive loss applies InfoNCE to $\ell_2$-normalized residuals from the two augmented views, which form a positive pair targeted at the synergistic component (Corollary~\ref{cor:synergy_contrastive}):
\begin{equation}
    \mathcal{L}_{\text{interaction}} = -\hat{I}_{\text{NCE}}\!\left(\frac{Z'_{\mathrm{res}}}{\|Z'_{\mathrm{res}}\|_2}, \;\frac{Z''_{\mathrm{res}}}{\|Z''_{\mathrm{res}}\|_2}\right),
    \label{eq:loss_interaction}
\end{equation}
using the same temperature $\tau$ as $\mathcal{L}_{\text{MCL}}$. Across all experiments, we use fixed weights $\lambda_{\text{int}} = 0.01$ and $\lambda_{\text{pred}} = 0.1$ without dataset-specific tuning, and linearly warm up $\lambda_{\text{int}}$ over the first 20 epochs to its full value. The sensitivity analyses in Appendices~\ref{app:loss_weights} and~\ref{appendix:warmup_ablation} justify these defaults. Full implementation details are provided in Appendix~\ref{app:implementation_details}.

\paragraph{Gradient isolation.}
Stop-gradient operators separate the parameter updates of $P$ from those of the encoder. In $\mathcal{L}_{\text{pred}}$, we detach both the target $(Z', Z'')$ and the unimodal inputs $(Z'_1, Z'_2, Z''_1, Z''_2)$ to $P$, restricting this loss to update only the parameters of $P$. In $\mathcal{L}_{\text{interaction}}$, we detach the predictions $\hat{Z}'$ and $\hat{Z}''$ before forming the residuals, restricting gradient propagation to $Z'$ and $Z''$ and the multimodal fusion path of the encoder, with no flow into $P$ or the unimodal representations. This construction maintains the convergence of $P$ to the MSE-optimal projector $P^\star$ required by Theorem~\ref{thm:residual} and directs gradient updates to the component of the encoder responsible for cross-modal computation.

\vspace{-1mm}
\section{Experiments}
\label{sec:experiments}
\vspace{-1mm}
We evaluate SynCo on controlled synthetic experiments and large-scale real-world benchmarks. We compare against cross-modal contrastive learning (Cross)~\citep{radford2021learning}, its extension with modality-wise self-supervised terms (Cross+Self)~\citep{yuan2021multimodal}, the disentanglement-based FactorCL~\citep{liang2023factorized}, the multimodal interaction framework CoMM~\citep{dufumier2024align}, and the masking-based InfMasking~\citep{wen2025infmasking} and COrAL~\citep{cissee2026orthogonalized}. We also add the SynCo terms to InfMasking for additional comparison.

On a synthetic bimodal dataset derived from Trifeature~\citep{hermann2020shapes}, we measure how well the learned representations capture each PID component: redundancy, uniqueness, and synergy. We then assess generalization on real-world tasks from MultiBench~\citep{liang2021multibench}, DARai~\citep{kaviani2025hierarchical}, and MM-IMDb~\citep{arevalo2017gated}. These benchmarks span diverse domains and modality combinations. We evaluate all methods via linear probing: after self-supervised pretraining, we freeze the encoder and train a linear classifier or regressor on the learned representations. We report mean and standard deviation over 5 independent runs. Appendix~\ref{app:implementation_details} provides full experimental details.

\subsection{Controlled Experiments on the Trifeature Dataset}
\label{sec:trifeature}
\vspace{-1mm}
We evaluate SynCo on a controlled synthetic benchmark following the experimental protocol of CoMM~\citep{dufumier2024align}. The dataset is derived from Trifeature~\citep{hermann2020shapes}, where each image is defined by three visual attributes: shape, texture, and color, each taking one of ten possible values. Bimodal inputs are constructed by pairing two images. This construction allows precise control over which attributes are shared or unique to each modality and which cross-modal relationships are synergistic.
We construct probing tasks for each PID component. The two constructions below define different pretraining sets, and we pretrain a separate model for each. For redundancy and uniqueness, we pair images by matching shape, making shape the redundant attribute and texture the unique attribute of each modality. We then evaluate linear probes predicting the shared shape (redundancy) and the texture of the first image (uniqueness), both with a chance level of 10\%. For synergy, we define a fixed bijection $\mathcal{M}$ between the ten textures and the ten colors. The self-supervised pretraining set consists of pairs $(X_1, X_2)$ satisfying $\mathcal{M}$, which establishes a cross-modal relationship between $\mathrm{texture}(X_1)$ and $\mathrm{color}(X_2)$ that cannot be recovered from either modality in isolation. The probing task is binary classification of whether a given pair satisfies $\mathcal{M}$, with label $Y = \mathbf{1}[(\mathrm{texture}(X_1), \mathrm{color}(X_2)) \in \mathcal{M}]$. The probe training and test sets are class-balanced, with positive pairs drawn from the pretraining distribution and negative pairs constructed from texture-color combinations outside $\mathcal{M}$, yielding a chance level of 50\%.

\begin{table}[tb]
\vspace{-3pt}
\centering
\scriptsize
\setlength{\abovecaptionskip}{2pt}
\setlength{\belowcaptionskip}{0pt}
\renewcommand{\arraystretch}{0.92}
\begin{minipage}[t]{0.58\linewidth}
\centering
\caption{Linear probing accuracy (\%) for redundancy, uniqueness, and synergy on Trifeature. Bold marks the best value per section. $\dagger$: results from \citet{dufumier2024align}.}
\label{trifeature}
\setlength{\tabcolsep}{3pt}
\begin{tabular}{@{}lccc@{}}
\toprule
\textit{Model} & \textit{redundancy}$\uparrow$ & \textit{uniqueness}$\uparrow$ & \textit{synergy}$\uparrow$ \\
\midrule
Cross$^\dagger$ & \textbf{100.0} & 11.6 & 50.0 \\
Cross+Self$^\dagger$ & 99.7 & 86.9 & 50.0 \\
FactorCL$^\dagger$ & 99.8 & 62.5 & 46.5 \\
COrAL & 99.7$_{\pm0.06}$ & \textbf{91.4}$_{\pm0.62}$ & 73.91$_{\pm0.51}$ \\
CoMM & 99.9$_{\pm0.05}$ & 86.6$_{\pm1.32}$ & 71.6$_{\pm1.07}$ \\
SynCo (ours) & 99.9$_{\pm0.21}$ & 90.15$_{\pm1.14}$ & \textbf{77.58}$_{\pm0.74}$ \\
\midrule
InfMasking & \textbf{99.9}$_{\pm0.07}$ & 89.1$_{\pm0.95}$ & 76.12$_{\pm1.42}$ \\
SynCo (ours) + InfMasking & \textbf{99.9}$_{\pm0.24}$ & \textbf{90.51}$_{\pm1.27}$ & \textbf{77.89}$_{\pm0.85}$ \\
\bottomrule
\end{tabular}
\vspace{-3pt}
\end{minipage}\hfill
\begin{minipage}[t]{0.39\linewidth}
\centering
\caption{Linear probing accuracy (\%) on the unimodal representations and on the three components of SynCo's representation decomposition across PID tasks on the Trifeature benchmark. $[Z_1; Z_2]$: unimodal; $Z$: multimodal; $\hat{Z}$: linear prediction; $Z_{\mathrm{res}}$: residual.}
\label{tab:decomp}
\setlength{\tabcolsep}{3pt}
\begin{tabular}{@{}lccc@{}}
\toprule
\textit{Representation} & \textit{Redun.}$\uparrow$ & \textit{Uniq.}$\uparrow$ & \textit{Syn.}$\uparrow$ \\
\midrule
$[Z_1; Z_2]$       & 99.9$_{\pm0.16}$ & 89.42$_{\pm2.33}$ & 50.00$_{\pm0.00}$ \\
$Z$                & 99.9$_{\pm0.21}$ & 90.15$_{\pm1.14}$ & 77.58$_{\pm0.74}$ \\
$\hat{Z}$          & 99.9$_{\pm0.18}$ & 89.26$_{\pm2.41}$ & 50.00$_{\pm0.00}$ \\
$Z_{\mathrm{res}}$ & 81.1$_{\pm2.17}$ & 57.40$_{\pm2.25}$ & \textbf{87.24}$_{\pm1.78}$ \\
\bottomrule
\end{tabular}
\vspace{-4pt}
\end{minipage}
\vspace{-7pt}
\end{table}

Results are reported in Table~\ref{trifeature}. Cross-modal contrastive training alone (Cross) achieves perfect redundancy but fails entirely on both uniqueness and synergy. Adding per-modality self-supervised terms (Cross+Self) recovers uniqueness but fails to improve synergy beyond chance. FactorCL performs below chance on synergy, confirming that disentanglement-based objectives do not resolve the synergy deficit. CoMM is the first method to capture all three PID components, and InfMasking further improves synergy through stochastic feature masking. SynCo surpasses CoMM by $+3.55\%$ on uniqueness and $+5.98\%$ on synergy while maintaining near-perfect redundancy, and outperforms InfMasking by $+1.46\%$ on synergy despite using a substantially simpler mechanism. COrAL achieves the highest uniqueness, while SynCo exceeds it by $+3.67\%$ on synergy. Combining SynCo with InfMasking yields the highest synergy among all methods. The consistent gains over CoMM and InfMasking confirm that our method provides effective synergy-focused supervision of the learned representations.

\paragraph{Linear probing on decomposed representations.}
We assess Hypothesis~\ref{hyp:synergy} empirically by training linear probes on the concatenated unimodal representations $[Z_1; Z_2]$ and on the three components of SynCo's representation: the multimodal representation $Z$, the linear prediction $\hat{Z} = P([Z_1; Z_2])$, and the residual $Z_{\mathrm{res}} = Z - \hat{Z}$. Table~\ref{tab:decomp} reports the results. Neither the concatenated unimodal representations nor the linear prediction $\hat{Z}$ exceeds chance-level accuracy on the synergy task, indicating that the synergy label is not linearly decodable from the unimodal representations or from their linear prediction, consistent with Hypothesis~\ref{hyp:synergy}. The residual $Z_{\mathrm{res}}$ surpasses the multimodal representation $Z$ on synergy by $+9.66\%$, consistent with Theorem~\ref{thm:residual}. The residual also contains nonzero $R$ and $U$ content, consistent with Remark~\ref{rem:scope}, since the linear projector removes only the linearly unimodal-predictable component and nonlinear $R + U$ content may remain in the residual.

\subsection{Experiments on Real-world Datasets}
\label{sec:experiments_realworld}
We evaluate SynCo on real-world multimodal datasets from MultiBench~\citep{liang2021multibench}, DARai~\citep{kaviani2025hierarchical}, and MM-IMDb~\citep{arevalo2017gated}. These benchmarks span diverse domains, modality types, and task formulations. Following prior work~\citep{liang2023factorized, dufumier2024align, wen2025infmasking}, we use the same data preprocessing steps, modality configurations, and backbone networks across all methods for MultiBench and MM-IMDb. For DARai, we adopt the sensor encoder from~\citep{khaertdinov2021contrastive}. Implementation details are provided in Appendix~\ref{app:implementation_details}.

\begin{table*}[b]
  \vspace{-2mm}
  \centering
  \scriptsize
  \setlength{\tabcolsep}{2pt}
  \renewcommand{\arraystretch}{0.8}
  \setlength{\aboverulesep}{0.2ex}
  \setlength{\belowrulesep}{0.2ex}
  \setlength{\abovecaptionskip}{2pt}
  \setlength{\belowcaptionskip}{0pt}
\caption{Linear probing on MultiBench: MSE ($\times 10^{-4}$) for regression and top-1 accuracy (\%) for classification. Bold: best within each section. $\dagger$: results from \cite{dufumier2024align}. $^*$: classification average.}
  \begin{tabular}{@{}lcccccc@{}}
    \toprule
    \multirow{2}{*}{\textit{Model}} & \textit{Regr.} & \multicolumn{5}{c}{\textit{Classification}} \\[-1pt]
    \cmidrule(lr){2-2} \cmidrule(lr){3-7}
    & \textit{V\&T}$\downarrow$ & \textit{MIMIC}$\uparrow$ & \textit{MOSI}$\uparrow$ & \textit{UR-FUNNY}$\uparrow$ & \textit{MUSTARD}$\uparrow$ & \textbf{Avg.}$^*$ $\uparrow$ \\
    \midrule
    Cross$^\dagger$ & $33.09_{\pm3.67}$ & $66.70_{\pm0.10}$ & $47.80_{\pm1.80}$ & $50.10_{\pm1.90}$ & $53.50_{\pm2.90}$ & $54.53$ \\
    Cross+Self$^\dagger$ & $7.56_{\pm0.31}$ & $65.49_{\pm0.00}$ & $49.00_{\pm1.10}$ & $59.90_{\pm0.90}$ & $53.90_{\pm4.00}$ & $57.07$ \\
    FactorCL$^\dagger$ & $10.82_{\pm0.56}$ & $67.30_{\pm0.00}$ & $51.20_{\pm1.60}$ & $60.50_{\pm0.80}$ & $55.80_{\pm0.90}$ & $58.70$ \\
    COrAL & $5.40_{\pm0.28}$ & $\mathbf{67.95}_{\pm0.21}$ & $66.40_{\pm0.84}$ & $64.80_{\pm0.47}$ & $66.10_{\pm1.02}$ & $66.31$ \\
    CoMM & $6.85_{\pm2.13}$ & $65.92_{\pm0.35}$ & $63.70_{\pm2.50}$ & $63.25_{\pm0.81}$ & $64.58_{\pm1.50}$ & $64.36$ \\
    SynCo (ours) & $\mathbf{5.21}_{\pm0.34}$ & $67.89_{\pm0.56}$ & $\mathbf{67.92}_{\pm1.06}$ & $\mathbf{66.52}_{\pm0.53}$ & $\mathbf{67.02}_{\pm1.12}$ & $\mathbf{67.34}$ \\
    \midrule
    InfMasking & $\mathbf{4.95}_{\pm0.56}$ & $67.53_{\pm0.32}$ & $67.12_{\pm0.75}$ & $64.11_{\pm0.95}$ & $65.62_{\pm0.90}$ & $66.10$ \\
    SynCo + InfMasking & $5.12_{\pm0.21}$ & $\mathbf{69.52}_{\pm0.49}$ & $\mathbf{67.81}_{\pm1.18}$ & $\mathbf{66.91}_{\pm0.62}$ & $\mathbf{67.21}_{\pm1.65}$ & $\mathbf{67.86}$ \\
    \bottomrule
  \end{tabular}
  \label{tab:multibench}
  \vspace{-4mm}
\end{table*}

\paragraph{MultiBench.}
We use five datasets from MultiBench: Vision\&Touch~\citep{lee2020making} for end-effector position regression from visual and proprioceptive modalities, MIMIC~\citep{johnson2016mimic} for binary classification of respiratory-system diagnoses from tabular and time-series modalities, and MOSI~\citep{zadeh2016multimodal}, UR-FUNNY~\citep{hasan2019ur}, and MUSTARD~\citep{castro2019towards} for sentiment analysis, humor detection, and sarcasm detection from visual and textual modalities. Appendix~\ref{sec:datasets} describes each dataset in detail. Table~\ref{tab:multibench} reports the results. SynCo outperforms CoMM, InfMasking, and COrAL on average classification accuracy and achieves the highest accuracy on MOSI among standalone methods. Combining SynCo with InfMasking yields the best overall performance, with the highest accuracy on MIMIC, UR-FUNNY, and MUSTARD, and the strongest average across classification tasks. On the regression task (V\&T), SynCo remains competitive with InfMasking, outperforms COrAL, and substantially outperforms all other baselines.

We report results on the wearable-sensor benchmark DARai~\citep{kaviani2025hierarchical} in Appendix~\ref{app:darai} and results in the trimodal setting in Appendix~\ref{app:trimodal}.


\paragraph{MM-IMDb.}
\begin{wraptable}{r}{0.44\textwidth}
\vspace{-8pt}
\centering
\scriptsize
\setlength{\tabcolsep}{3pt}
\renewcommand{\arraystretch}{0.85}
\setlength{\aboverulesep}{0.2ex}
\setlength{\belowrulesep}{0.2ex}
\setlength{\abovecaptionskip}{2pt}
\setlength{\belowcaptionskip}{2pt}
\caption{Linear probing F1-scores (weighted and macro, in \%) on MM-IMDb (vision and language). All self-supervised methods use a CLIP backbone. $\dagger$ denotes results from \cite{dufumier2024align}.}
\label{tab:imdb}
\begin{tabular}{@{}lcc@{}}
\toprule
\textit{Model} & \textit{w-F1}$\uparrow$ & \textit{m-F1}$\uparrow$ \\
\midrule
CLIP$^{\dagger}$~\cite{radford2021learning} & $54.49_{\pm0.19}$ & $44.94_{\pm0.30}$ \\
SLIP$^{\dagger}$~\cite{mu2022slip} & $56.54_{\pm0.19}$ & $47.35_{\pm0.27}$ \\
CoMM~\cite{dufumier2024align} & $61.29_{\pm0.73}$ & $53.79_{\pm0.22}$ \\
SynCo (ours) & $\mathbf{62.12}_{\pm0.17}$ & $\mathbf{55.41}_{\pm0.16}$ \\
\midrule
InfMasking~\cite{wen2025infmasking} & $61.97_{\pm0.28}$ & $55.28_{\pm0.15}$ \\
SynCo + InfMask. & $\mathbf{62.23}_{\pm0.27}$ & $\mathbf{55.64}_{\pm0.22}$ \\
\bottomrule
\end{tabular}
\vspace{-4pt}
\end{wraptable}
MM-IMDb~\citep{arevalo2017gated} is a multi-label movie genre classification dataset with visual (poster) and textual (plot summary) modalities. The dataset is challenging due to substantial class imbalance across genres and a large semantic gap between the two modalities. We compare against CLIP~\citep{radford2021learning}, SLIP~\citep{mu2022slip}, CoMM~\citep{dufumier2024align}, and InfMasking~\citep{wen2025infmasking}, with all methods using a CLIP backbone. Table~\ref{tab:imdb} reports the results. SynCo outperforms both CoMM and standalone InfMasking on weighted-F1 and macro-F1 and achieves the best standalone performance. Combining SynCo with InfMasking yields a further gain and the best overall result on both metrics.

\subsection{Ablation Studies}
\label{sec:ablation}

\begin{wraptable}{r}{0.33\textwidth}
\vspace{-12pt}
\centering
\scriptsize
\setlength{\tabcolsep}{3pt}
\renewcommand{\arraystretch}{0.9}
\caption{Synergy probe accuracy (\%) on Trifeature across projector architectures. Chance level is 50\%.}
\label{tab:projector_ablation}
\begin{tabular}{@{}lccc@{}}
\toprule
\textit{Projector} & $Z$\,$\uparrow$ & $\hat{Z}$\,$\downarrow$ & $Z_{\mathrm{res}}$\,$\uparrow$ \\
\midrule
Linear (ours)  & \textbf{77.58} & \textbf{50.00} & \textbf{87.24} \\
MLP-2          & 74.14 & 59.75 & 75.31 \\
MLP-3          & 73.11 & 62.10 & 74.13 \\
MLP-5          & 71.25 & 64.81 & 72.84 \\
Gated Residual & 70.56 & 68.19 & 69.41 \\
Transformer-4  & 66.45 & 71.47 & 62.04 \\
\bottomrule
\end{tabular}
\vspace{-10pt}
\end{wraptable}
\textbf{Ablation on Projector Architecture.}
The residual $Z_{\mathrm{res}} = Z - \hat{Z}$ contains the part of $Z$ that $P$ cannot predict, so a higher-capacity projector may capture more of the fused representation, leaving less information in the residual. To assess this design choice, we replace $P$ with five higher-capacity variants: 2-layer, 3-layer, and 5-layer MLPs, a Gated Residual projector, and a 4-layer Transformer. Holding all other components fixed, we evaluate linear probing accuracy on the Trifeature synergy task for $Z$, $\hat{Z}$, and $Z_{\mathrm{res}}$. Architectural details are provided in Appendix~\ref{appendix:projector_variants}.

Table~\ref{tab:projector_ablation} shows that the prediction $\hat{Z}$ of the linear projector remains at chance on the synergy task, while higher-capacity projectors increasingly capture information useful for the synergy task in $\hat{Z}$. As projector capacity increases, synergy probe accuracy decreases in both $Z_{\mathrm{res}}$ and $Z$, weakening the effectiveness of $\mathcal{L}_{\mathrm{interaction}}$ for synergy-focused supervision. The linear projector achieves the highest synergy probe accuracy in $Z_{\mathrm{res}}$ while removing the linearly unimodal-predictable component of the fused representation. Additional ablations on the loss terms, loss weights, and warmup length are provided in Appendices~\ref{app:loss_ablation}, \ref{app:loss_weights}, and~\ref{appendix:warmup_ablation}, respectively.

\vspace{-1mm}
\section{Conclusion}

\label{sec:conclusion}
We presented SynCo, a contrastive multimodal framework that addresses synergy undertraining by supervising an interaction residual that removes the linearly unimodal-predictable component of the fused representation. The Synergy Head provides synergy-focused supervision at negligible computational cost. On the controlled Trifeature benchmark, SynCo achieves state-of-the-art synergy capture and improves uniqueness while maintaining near-perfect redundancy. On real-world benchmarks from MultiBench, DARai, and MM-IMDb, SynCo consistently outperforms or matches prior methods across diverse modality combinations and task types, including vision-language, sensor-based, and clinical domains. SynCo adds only two loss terms and a lightweight prediction head without modifying the underlying fusion architecture, making it a straightforward plug-in for existing multimodal contrastive frameworks.

\newpage

\newpage
\pagebreak
\bibliography{iclr2027_conference}
\bibliographystyle{iclr2027_conference}
\newpage
\pagebreak


\appendix
\section{Appendix Implementation Details}\label{app:implementation_details}

This section describes the encoder architectures, modality-specific augmentations, latent converters, and optimization settings used across all experiments.

\subsection{Encoder Architectures}

\paragraph{Trifeature.} Both modalities are visual, so we use a shared AlexNet backbone~\citep{krizhevsky2012imagenet} with a $512$-dimensional embedding space, matching the encoder choice of CoMM. We remove the final average-pooling layer and apply a linear patch-embedding module~\citep{dosovitskiy2020image} to the resulting $6 \times 6$ feature maps. Fixed two-dimensional sine-cosine positional embeddings are added before the fusion transformer.

\paragraph{MIMIC.} The two modalities of MIMIC are tabular and time-series. The tabular branch uses a two-layer MLP with a $10$-dimensional hidden size and a $10$-dimensional output, and the time-series branch uses a GRU with a $512$-dimensional hidden state. A feature tokenizer~\citep{gorishniy2021revisiting} maps the tabular output to a $512$-dimensional embedding space, and no latent converter is applied to the time-series branch.

\paragraph{MOSI, UR-FUNNY, and MUSTARD.} For these three datasets the modalities are pre-extracted visual and textual features. We follow FactorCL~\citep{liang2023factorized} and CoMM in using a five-head, five-layer Transformer with a $40$-dimensional embedding space for each modality, with no latent converter.

\paragraph{Vision\&Touch.} For the end-effector regression task we use the visual and force-torque modalities. Images are encoded with a ResNet-18~\citep{he2016deep} producing $128$-channel spatial feature maps, and force-torque signals are encoded with a five-layer causal convolutional network producing a $128$-dimensional output, following the original Vision\&Touch protocol~\citep{lee2020making}. A patch-embedding module is applied to the ResNet-18 feature maps, and the force-torque embeddings are tokenized through a feature tokenizer.

\paragraph{DARai.} For each wearable modality we adopt the encoder architecture introduced for sensor-based human activity recognition by Khaertdinov et al.~\citep{khaertdinov2021contrastive}, which combines a stack of three one-dimensional convolutional layers, each followed by batch normalization and a ReLU activation and using reflective padding to preserve the temporal length, with a transformer self-attention block. We fix the kernel size to three throughout the convolutional stack and use eight attention heads in the transformer block. All sensor streams are resampled to a common length of $100$ time steps before encoding, so that the IMU, EMG, and insole branches operate on tensors of identical temporal extent. The three branches share this backbone configuration and differ only in the channel count of the input layer, which is determined by the corresponding sensor dimensionality.

\paragraph{MM-IMDb.} The visual encoder is a ViT-B/$32$~\citep{dosovitskiy2020image} pre-trained with CLIP~\citep{radford2021learning}, and the textual encoder is the multilingual Sentence-BERT model distilled from CLIP~\citep{reimers2019sentence}. Both encoders are frozen during pretraining, and SynCo operates directly on the token embeddings produced by these backbones, without latent converters.

\subsection{Fusion Transformer and Linear Projector}

The fusion module is a Transformer encoder layer with multi-head self-attention, a feed-forward block, residual connections, and layer normalization. We use a single layer with eight heads in the bimodal setting and two layers with eight heads in the trimodal setting. A learnable \texttt{[CLS]} token is prepended to the input sequence, and the corresponding output token is read out as the fused representation $Z$. The unimodal representation $Z_i$ is the \texttt{[CLS]} output of the same modality encoders and fusion transformer when all modalities except the $i$-th are masked, so the fused and unimodal representations lie in the same space $\mathbb{R}^{D}$. The linear projector $P$ in SynCo is implemented as a single linear layer mapping $\mathbb{R}^{nD}$ to $\mathbb{R}^{D}$, where $D$ is the dimensionality of $Z$ and $n$ is the number of modalities. It is applied to the concatenation of the unimodal representations $[Z_1; \ldots; Z_n]$ and produces $\hat{Z}$.

\subsection{Residual Computation and Training Dynamics}

\paragraph{Gradient structure.}
The three stop-gradient placements introduced in Section~\ref{sec:implementation} yield a clear separation of learning roles. The linear projector $P$ receives gradients only from $\mathcal{L}_{\text{pred}}$ and learns to approximate $Z$ from $[Z_1; Z_2]$ to the extent permitted by a linear function. The encoder and fusion transformer receive gradients from two distinct sources: the multimodal contrastive loss $\mathcal{L}_{\text{MCL}}$, which propagates through the projection head and operates in the projected space, and the interaction contrastive loss $\mathcal{L}_{\text{interaction}}$, which propagates through $Z'_{\mathrm{res}}$ and $Z''_{\mathrm{res}}$ and operates in the raw fusion space. The interaction contrastive loss specifically supervises features that $P$ cannot predict, which by Theorem~\ref{thm:residual} provide at least $S$ additional task-relevant information given the linear prediction. The construction does not introduce conflicting optimization objectives. $P$ is trained to minimize prediction error and therefore cannot increase the residual norm. The encoder receives no gradient that would degrade the unimodal representations to enlarge the residual, since the predictions $\hat{Z}'$ and $\hat{Z}''$ are detached in $\mathcal{L}_{\text{interaction}}$ and no gradient from this loss passes through $P$ or the unimodal representations. The modality encoders and fusion transformer are shared between the fused and unimodal paths, so updates driven by $\mathcal{L}_{\text{interaction}}$ still change the unimodal representations indirectly, and the unimodal terms of $\mathcal{L}_{\text{MCL}}$ keep them informative about $R + U_i$ (Lemma~\ref{lem:unimodal_mi}).

\paragraph{Training dynamics.}
SynCo exhibits a natural curriculum arising from the interplay of prediction and interaction objectives, and the implementation additionally performs an explicit linear warmup of the interaction weight. Early in training the projector $P$ is randomly initialized and the residual approximates the fused representation, so the interaction contrastive term initially provides a broad discriminative signal. To avoid the interaction term dominating too early, we linearly scale the effective interaction weight from zero to its configured value over a fixed number of warmup epochs:
\[
\lambda_{\text{int}}^{\text{eff}} = \lambda_{\text{int}} \cdot \min\!\left(1, \frac{\text{epoch}}{\text{warmup\_epochs}}\right).
\]
In our runs, \texttt{warmup\_epochs} is set to 20, so $\lambda_{\text{int}}$ is fully active only after the warmup period. The prediction loss $\mathcal{L}_{\text{pred}}$ is applied from the start, while the interaction loss is gradually annealed in as $P$ converges, so that the residual progressively excludes the linearly unimodal-predictable component. Beyond this linear warmup of $\lambda_{\text{int}}$, no additional scheduling or curriculum is required.

\subsection{Modality-Specific Augmentations}

For raw images on Trifeature, MM-IMDb, and Vision\&Touch, we adopt the SimCLR augmentation pipeline~\citep{chen2020simple}. Tabular features in MIMIC are perturbed with additive Gaussian noise applied independently to each feature dimension. For pre-extracted time-series features in MIMIC, MOSI, UR-FUNNY, MUSTARD, and the force-torque modality of Vision\&Touch, we apply Gaussian noise together with random dropping of a uniformly sampled fraction of the sequence between 0\% and 80\%; dropped timesteps are zeroed rather than removed. For the wearable signals in DARai, we do not use sequence dropping; instead we apply sensor-level augmentations including additive Gaussian jittering, optional scaling, random sign and rotation, permutation of segments, and channel shuffling, as implemented in the DARai transforms. For raw text on MM-IMDb, we apply random token masking with probability $15\%$, where tokens are selected uniformly with pad, CLS, and SEP tokens excluded, by replacing selected token ids with the tokenizer's mask token. This is a direct mask-replacement scheme applied only during training and does not implement the BERT $80/10/10$ replace/keep schedule.

\subsection{Optimization}

All experiments use the AdamW optimizer~\citep{loshchilov2017decoupled}, except for DARai where we use Adam following the protocol of the source benchmark. The learning rate is set to $3 \times 10^{-4}$ with weight decay $10^{-4}$ on Trifeature, $1 \times 10^{-3}$ with weight decay $10^{-2}$ on MIMIC, MOSI, UR-FUNNY, and MUSTARD, $1 \times 10^{-4}$ with weight decay $10^{-2}$ on Vision\&Touch and MM-IMDb, and $1 \times 10^{-3}$ on DARai. Pretraining uses a batch size of $64$ across all datasets. For MM-IMDb we additionally use a cosine schedule with a final value of $10^{-6}$ and a ten-epoch warmup. Pretraining is run for $100$ epochs on every dataset except MM-IMDb, where we run $70$ epochs following CoMM, and DARai, where we pretrain for $300$ epochs following the source benchmark. The InfoNCE critic is implemented as a three-layer MLP with $512$-dimensional hidden layers and a $256$-dimensional output, matching the projection head architecture of SimCLR. The temperature $\tau$ in every InfoNCE term is set to $0.1$ and shared across $\mathcal{L}_{\text{MCL}}$ and $\mathcal{L}_{\text{interaction}}$. Across all experiments we use the fixed coefficients $\lambda_{\text{int}} = 0.01$ and $\lambda_{\text{pred}} = 0.1$ for the two SynCo-specific terms. The sensitivity analysis in Appendix~\ref{app:loss_weights} justifies these defaults. All experiments are performed on a single NVIDIA A40 GPU with 48 GB of memory.

\subsection{Linear Probing Protocol}

After pretraining, the encoder and fusion transformer are frozen, and a single linear layer is trained on top of the multimodal representation $Z$ for $100$ epochs. Classification tasks are optimized with cross-entropy loss, and the Vision\&Touch end-effector task is optimized with mean-squared error. We use early stopping on the validation set and report mean and standard deviation across five independent runs.

\subsection{Projector Architecture Ablation Details}
\label{appendix:projector_variants}
This appendix details the projector variants evaluated in the ablation reported in Section~\ref{sec:ablation}. All variants implement the mapping $P : \mathbb{R}^{nD} \to \mathbb{R}^{D}$ from the concatenated unimodal representations $[Z_1; \ldots; Z_n]$ to the fused representation space, where $D$ is the dimensionality of $Z$ and $n$ is the number of modalities. Throughout the ablation, only the architecture of $P$ is varied; the encoder, fusion transformer, augmentations, optimizer, loss coefficients, and training schedule are held fixed at the values reported in Appendix~\ref{app:implementation_details}, and each variant is trained jointly with the SynCo objective. The stop-gradient configuration from Section~\ref{sec:method} continues to apply, so $\mathcal{L}_{\text{pred}}$ updates only the parameters of $P$ regardless of its architecture. All variants are described under the bimodal Trifeature setting with $n = 2$ and $D = 512$.

\paragraph{Linear (baseline).} A single fully connected layer with weight matrix $W \in \mathbb{R}^{D \times nD}$ and bias $b \in \mathbb{R}^{D}$, giving $\hat{Z} = W[Z_1; Z_2] + b$. This is the projector used in all main-text results.

\paragraph{2-layer MLP.} A two-layer feed-forward network. The first layer maps $\mathbb{R}^{nD}$ to a hidden dimension $h = (nD + D)/2$, followed by a ReLU activation. The second layer maps the hidden representation to $\mathbb{R}^{D}$.

\paragraph{3-layer and 5-layer MLPs.} Stacks of three and five fully connected layers respectively, with ReLU activations between consecutive layers and constant intermediate widths equal to the output dimension $D$. The first layer maps $\mathbb{R}^{nD}$ to $\mathbb{R}^{D}$ and all subsequent layers operate within $\mathbb{R}^{D}$.

\paragraph{Gated Residual.} A linear branch combined additively with a parallel nonlinear branch:
\begin{equation}
\hat{Z} = W_{\ell}[Z_1; Z_2] + b_{\ell} + \gamma \cdot \mathrm{MLP}([Z_1; Z_2]),
\end{equation}
where $W_{\ell} \in \mathbb{R}^{D \times nD}$ and $b_{\ell} \in \mathbb{R}^{D}$ parameterize the linear branch, $\mathrm{MLP}$ is a two-layer feed-forward network with hidden dimension $D$ and ReLU activation, and $\gamma \in \mathbb{R}$ is a learnable scalar gate initialized to zero. Initializing $\gamma$ to zero renders the projector strictly linear at the start of training, so the residual loss begins from the same starting point as the linear baseline, and $\gamma$ grows only when the nonlinear branch reduces $\mathcal{L}_{\text{pred}}$.

\paragraph{Transformer (4-layer).} The concatenated input $[Z_1; Z_2] \in \mathbb{R}^{nD}$ is reshaped into $n$ modality tokens of dimension $D$, and learned modality-specific embeddings $E_1, \dots, E_n \in \mathbb{R}^{D}$ are added to the corresponding tokens. The resulting sequence is processed by a stack of four Transformer encoder layers, each containing multi-head self-attention with eight heads, a position-wise feed-forward sublayer with hidden dimension $4D$ and GELU activation, and pre-layer normalization on both sublayers. The output tokens are flattened to $\mathbb{R}^{nD}$ and passed through a two-layer MLP head with hidden dimension $D$ and GELU activation, producing $\hat{Z} \in \mathbb{R}^{D}$. This variant comprises approximately 14.7M parameters.

\subsection{Combining SynCo with InfMasking}
\label{app:synco_infmasking}

Combining SynCo with InfMasking augments the multimodal contrastive terms with a masking objective
$\mathcal{L}_{\text{mask}}$ that stochastically replaces fusion-input tokens with a
learned mask token and maximizes agreement between the masked and unmasked
multimodal representations. Its total loss is the uniform mean of the masking
terms and the contrastive terms of $\mathcal{L}_{\text{MCL}}$; we denote it
$\mathcal{L}_{\text{InfMasking}}$ and leave it untouched. The combined objective is
\begin{equation}
  \mathcal{L}_{\text{SynCo+InfMasking}} = \mathcal{L}_{\text{InfMasking}}
  + \lambda_{\text{int}}\,\mathcal{L}_{\text{interaction}}
  + \lambda_{\text{pred}}\,\mathcal{L}_{\text{pred}}.
\end{equation}

The Synergy Head operates on the unmasked representations. The residuals $Z'_{\mathrm{res}}$ and $Z''_{\mathrm{res}}$ are formed from the fused representations $Z'$ and $Z''$ computed without InfMasking's feature masking, and the projector $P$ receives the unimodal representations $Z'_i$ and $Z''_i$ defined in Section~\ref{sec:preliminaries}. The stochastically masked representations enter only $\mathcal{L}_{\text{mask}}$. InfMasking perturbs the input to fusion, while SynCo decomposes the unmasked output of fusion, and the two mechanisms act on different quantities.

The stop-gradient configuration of Section~\ref{sec:implementation} is unchanged, and $\mathcal{L}_{\text{mask}}$ does not update $P$. We use the InfMasking hyperparameters of~\citet{wen2025infmasking} for the masking ratio, the number of masked views, and $\lambda_{\text{mask}}$, together with the fixed SynCo coefficients $\lambda_{\text{int}} = 0.01$ and $\lambda_{\text{pred}} = 0.1$ and the 20-epoch warmup of $\lambda_{\text{int}}$. All other settings follow Appendix~\ref{app:implementation_details}. The combination requires no forward passes beyond those of InfMasking, and its only additional cost is the linear projector.

\section{Appendix Additional Experiments}

\subsection{Experiments on DARai}
\label{app:darai}

DARai~\citep{kaviani2025hierarchical} is a human activity recognition dataset with multiple wearable sensor modalities, and we use it to evaluate whether SynCo's gains generalize across modality pairings. We evaluate three bimodal configurations, IMU + Insole, IMU + EMG, and EMG + Insole, and compare FactorCL, CoMM, InfMasking, and SynCo under identical preprocessing and backbones. Appendix~\ref{sec:datasets} describes the sensor modalities and the three configurations in detail.

\begin{table}[ht]
  \centering
  \footnotesize
  \setlength{\tabcolsep}{6pt}
  \renewcommand{\arraystretch}{0.9}
  \caption{Linear probing top-1 accuracy (\%) on DARai across three bimodal sensor configurations. Bold values indicate the best within each section.}
  \label{tab:darai}
  \begin{tabular}{@{}lcccc@{}}
    \toprule
    \textit{Model} & \textit{IMU+Insole}$\uparrow$ & \textit{IMU+EMG}$\uparrow$ & \textit{EMG+Insole}$\uparrow$ & \textbf{Avg.}$\uparrow$ \\
    \midrule
    FactorCL & $36.74_{\pm1.15}$ & $34.18_{\pm1.05}$ & $30.92_{\pm1.18}$ & $33.95$ \\
    CoMM & $42.12_{\pm0.81}$ & $41.85_{\pm0.94}$ & $32.21_{\pm1.05}$ & $38.73$ \\
    SynCo (ours) & $\mathbf{45.52}_{\pm0.54}$ & $\mathbf{43.78}_{\pm0.42}$ & $\mathbf{36.15}_{\pm0.59}$ & $\mathbf{41.82}$ \\
    \midrule
    InfMasking & $43.85_{\pm1.13}$ & $42.32_{\pm0.84}$ & $36.04_{\pm1.12}$ & $40.74$ \\
    SynCo (ours) + InfMasking & $\mathbf{45.21}_{\pm0.98}$ & $\mathbf{44.15}_{\pm1.07}$ & $\mathbf{36.38}_{\pm1.06}$ & $\mathbf{41.91}$ \\
    \bottomrule
  \end{tabular}
\end{table}

Table~\ref{tab:darai} reports the results. SynCo outperforms CoMM on every configuration and achieves the best standalone average. Combining SynCo with InfMasking yields an additional gain on average, with the best mean accuracy on IMU + EMG and EMG + Insole, while standalone SynCo leads on IMU + Insole. The consistent gains across configurations show that SynCo generalizes beyond vision-language settings to wearable sensor data.

\subsection{Experiments with 3 Modalities}
\label{app:trimodal}
We extend our evaluation to the trimodal setting using MOSI, UR-FUNNY, and MUSTARD with their full text, audio, and video configurations. While Section~\ref{sec:experiments_realworld} reports results on the bimodal variants of these datasets, the trimodal setting introduces additional cross-modal interactions that test whether the synergy-targeted supervision in SynCo scales beyond two modalities. We follow the same experimental protocol as the bimodal evaluation, comparing against CoMM and InfMasking under identical preprocessing, backbone networks, and linear probing procedures.
\begin{table}[ht]
  \centering
  \footnotesize
  \setlength{\tabcolsep}{6pt}
  \renewcommand{\arraystretch}{0.9}
  \caption{Linear probing top-1 accuracy (in \%) on trimodal classification benchmarks (text, audio, video).}
  \begin{tabular}{@{}lcccc@{}}
    \toprule
    \textit{Model} & \textit{MOSI}$\uparrow$ & \textit{UR-FUNNY}$\uparrow$ & \textit{MUSTARD}$\uparrow$ & \textbf{Average}$\uparrow$ \\
    \midrule
    CoMM \cite{dufumier2024align} & $64.00_{\pm 2.33}$ & $64.74_{\pm 0.71}$ & $65.52_{\pm 1.42}$ & $64.75$ \\
    SynCo (ours) & $\mathbf{68.11}_{\pm 0.92}$ & $\mathbf{66.85}_{\pm 0.55}$ & $\mathbf{68.12}_{\pm 1.24}$ & $\mathbf{67.69}$ \\
    \midrule
    InfMasking \cite{wen2025infmasking} & $67.15_{\pm 1.21}$ & $64.95_{\pm 0.92}$ & $66.92_{\pm 2.75}$ & $66.34$ \\
    SynCo (ours) + InfMasking & $\mathbf{67.52}_{\pm 1.11}$ & $\mathbf{66.18}_{\pm 0.57}$ & $\mathbf{67.48}_{\pm 1.48}$ & $\mathbf{67.06}$ \\
    \bottomrule
  \end{tabular}
  \label{tab:trimodal}
\end{table}
Table~\ref{tab:trimodal} reports the results. SynCo achieves the strongest performance across all three datasets and on average, outperforming CoMM by $+2.94\%$ and InfMasking by $+1.35\%$ in average classification accuracy. The improvement over CoMM is consistent with the bimodal results and indicates that the residual decomposition continues to provide effective synergy-focused supervision as the number of modalities increases. Combining SynCo with InfMasking improves over standalone InfMasking on average but does not exceed standalone SynCo in the trimodal setting.

\subsection{Ablation Experiment: Sensitivity to Loss Weights}
\label{app:loss_weights}

The two coefficients in the SynCo objective control disjoint parameter sets due to the stop-gradient configuration: $\lambda_{\text{pred}}$ governs only the optimization of $P$, while $\lambda_{\text{int}}$ scales the encoder gradient from $\mathcal{L}_{\text{interaction}}$. We sweep each coefficient on Trifeature with the other held at its default value.

\begin{figure}[ht]
    \centering
    \begin{subfigure}[b]{0.4\linewidth}
        \centering
        \includegraphics[width=\linewidth]{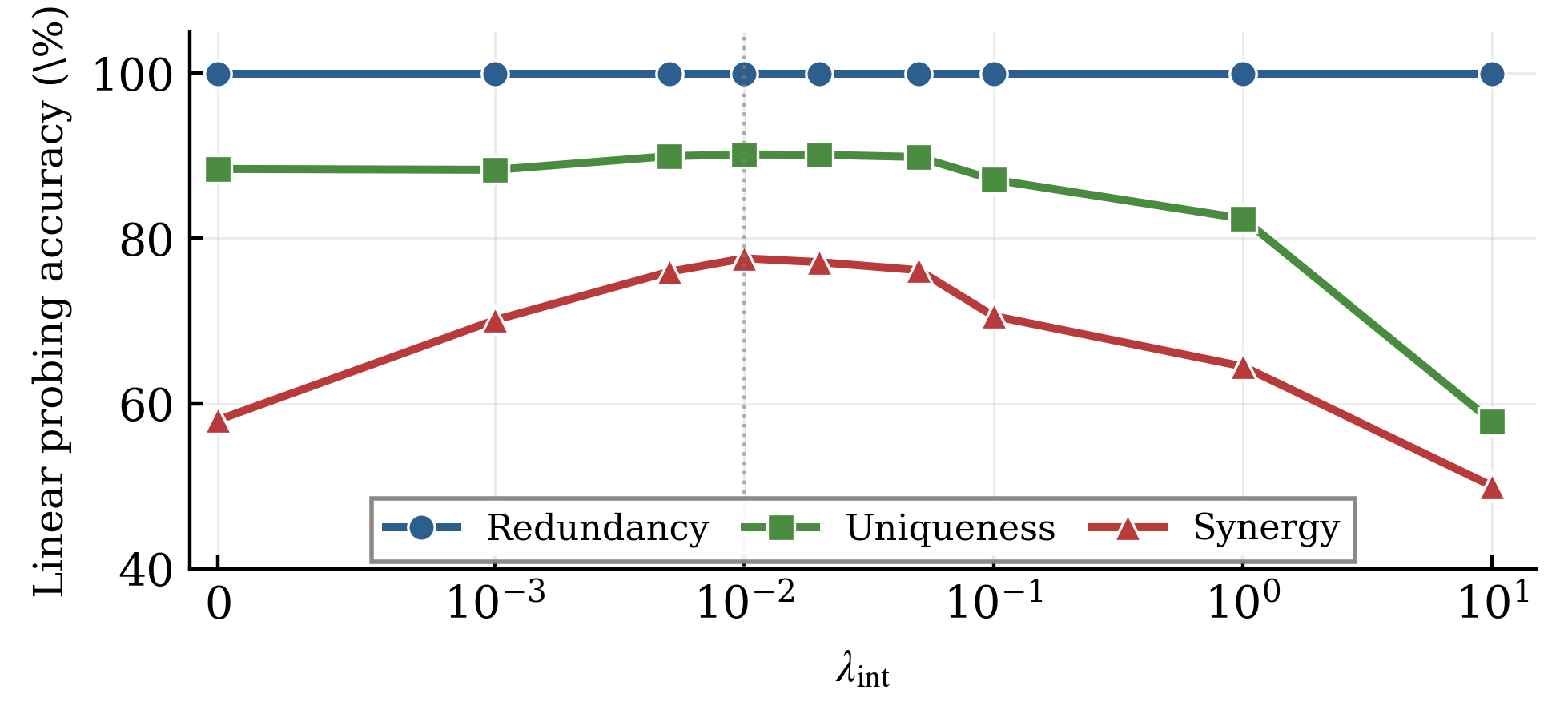}
        \phantomcaption
        \label{fig:lambda_int}
        {\small (a) $\lambda_{\mathrm{int}}$ sweep}
    \end{subfigure}
    \hspace{1em}
    \begin{subfigure}[b]{0.4\linewidth}
        \centering
        \includegraphics[width=\linewidth]{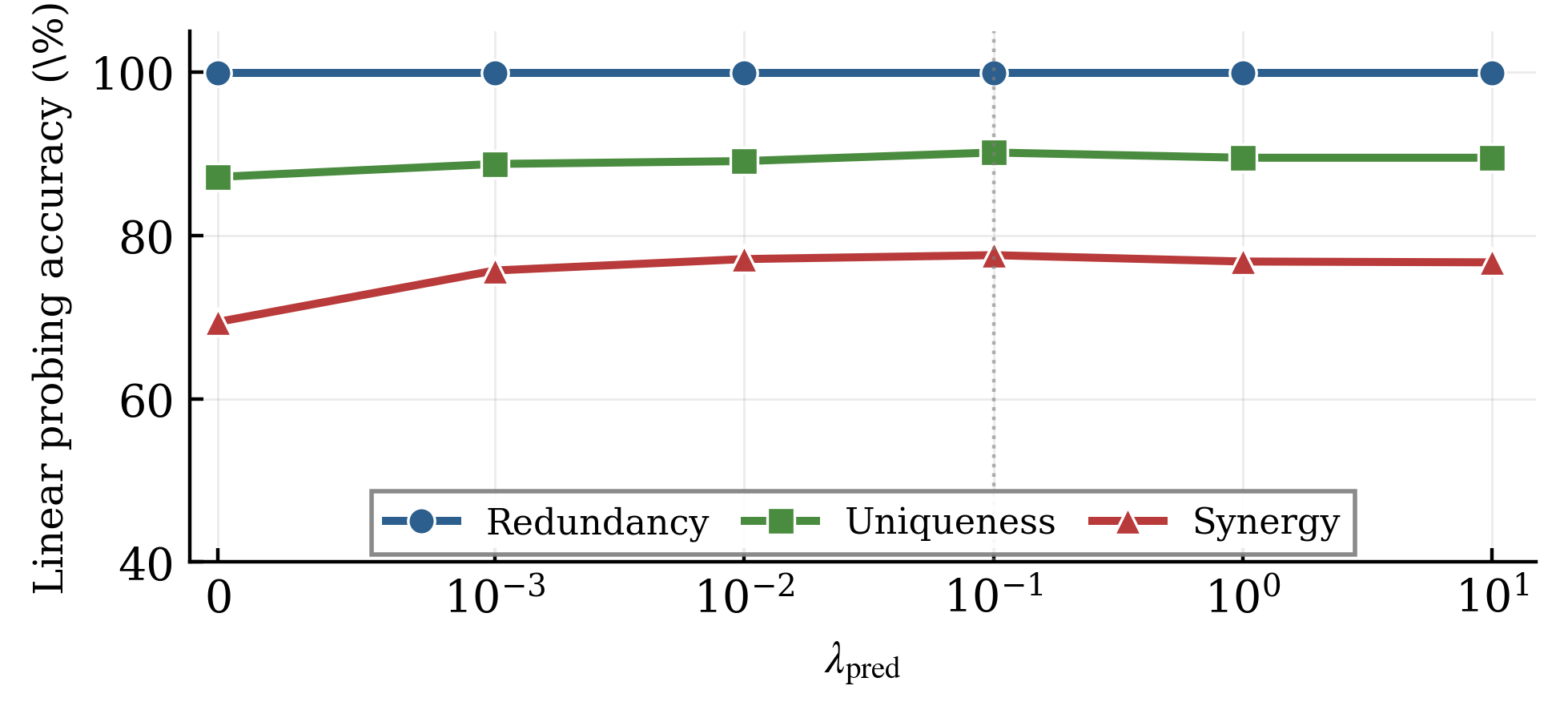}
        \phantomcaption
        \label{fig:lambda_pred}
        {\small (b) $\lambda_{\mathrm{pred}}$ sweep}
    \end{subfigure}
    \caption{Sensitivity of SynCo's loss weights on Trifeature. (a) $\lambda_{\mathrm{pred}} = 0.1$ fixed. (b) $\lambda_{\mathrm{int}} = 0.01$ fixed. Dotted lines mark the values used for all experiments.}
    \label{fig:ablation}
\end{figure}

Figure~\ref{fig:lambda_int} reports the sweep over $\lambda_{\text{int}}$ with $\lambda_{\text{pred}} = 0.1$. Synergy rises with $\lambda_{\text{int}}$, reaches a peak at $\lambda_{\text{int}} = 0.01$, and degrades at larger values. Redundancy remains stable across all values, while uniqueness degrades alongside synergy at very large $\lambda_{\text{int}}$, which indicates that overly large values disrupt overall representation quality. Figure~\ref{fig:lambda_pred} reports the sweep over $\lambda_{\text{pred}}$ with $\lambda_{\text{int}} = 0.01$. Once $\lambda_{\text{pred}}$ is sufficiently large to train $P$ effectively, all three components remain approximately flat across several orders of magnitude. The flat response is consistent with the gradient isolation property: $\lambda_{\text{pred}}$ scales the loss for $P$ alone and does not directly affect the encoder. The two sweeps show that $\lambda_{\text{int}} = 0.01$ yields the best synergy performance and $\lambda_{\text{pred}} = 0.1$ provides a sufficient threshold for training $P$; both are used as fixed defaults throughout all experiments.

\subsection{Ablation Experiment: Loss Terms}
\label{app:loss_ablation}

The SynCo objective in Eq.~\eqref{eq:synco_total} adds two terms to $\mathcal{L}_{\text{MCL}}$, and the interaction residual is defined relative to the fused representation that the multimodal term of Eq.~\eqref{eq:comm} shapes. To isolate the contribution of each component, we train three objectives on Trifeature under identical settings: $\mathcal{L}_{\text{MCL}}$ alone, the full SynCo objective, and the SynCo objective without the multimodal term $\hat{I}_{\text{NCE}}(Z', Z'')$. All other hyperparameters are held fixed at the values reported in Appendix~\ref{app:implementation_details}, and we report mean and standard deviation across five independent runs.

\begin{table}[ht]
  \centering
  \footnotesize
  \setlength{\tabcolsep}{6pt}
  \renewcommand{\arraystretch}{0.9}
  \caption{Linear probing accuracy (\%) of redundancy, uniqueness, and synergy on Trifeature under different combinations of loss terms.}
  \label{tab:loss_ablation}
  \begin{tabular}{@{}lccc@{}}
    \toprule
    \textit{Objective} & \textit{redundancy}$\uparrow$ & \textit{uniqueness}$\uparrow$ & \textit{synergy}$\uparrow$ \\
    \midrule
    $\mathcal{L}_{\text{MCL}}$ (CoMM) & $\mathbf{99.9}_{\pm0.05}$ & $86.6_{\pm1.32}$ & $71.6_{\pm1.07}$ \\
    SynCo w/o $\hat{I}_{\text{NCE}}(Z', Z'')$ & $99.7_{\pm0.37}$ & $88.1_{\pm0.49}$ & $75.5_{\pm0.28}$ \\
    SynCo (full) & $\mathbf{99.9}_{\pm0.21}$ & $\mathbf{90.15}_{\pm1.14}$ & $\mathbf{77.58}_{\pm0.74}$ \\
    \bottomrule
  \end{tabular}
\end{table}

Table~\ref{tab:loss_ablation} reports the results. The residual term accounts for most of the gain over the baseline: without the multimodal term, SynCo still improves synergy by $3.90$ points over $\mathcal{L}_{\text{MCL}}$, and adding the multimodal term recovers the remaining $2.08$ points. The multimodal term therefore remains necessary. It drives $Z$ toward $\Delta = 0$ in Theorem~\ref{thm:residual}, and the residual is informative about synergy to the extent that the fused representation it is computed from preserves the task-relevant information. Uniqueness follows the same pattern, while redundancy is unaffected by either term.

\subsection{Linear Probes on Unimodal Representations}
\label{app:unimodal_probes}

Hypothesis~\ref{hyp:synergy} concerns the linear prediction of the fused representation from the unimodal representations, and Table~\ref{tab:decomp} is consistent with it for the MSE-optimal projector $P^\star$. Here we probe the unimodal representations directly. We freeze the SynCo encoder trained on the Trifeature synergy task and train linear probes on the individual unimodal representations $Z_1$ and $Z_2$, on their concatenation $[Z_1; Z_2]$, and on the fused representation $Z$, following the protocol of Appendix~\ref{app:implementation_details}.

\begin{table}[ht]
  \centering
  \footnotesize
  \setlength{\tabcolsep}{6pt}
  \renewcommand{\arraystretch}{0.9}
  \caption{Linear probing accuracy (\%) on the Trifeature synergy task for the unimodal representations, their concatenation, and the fused representation. Chance level is 50\%.}
  \label{tab:unimodal_probes}
  \begin{tabular}{@{}lc@{}}
    \toprule
    \textit{Representation} & \textit{synergy}$\uparrow$ \\
    \midrule
    $Z_1$ (modality 1) & $50.00_{\pm0.00}$ \\
    $Z_2$ (modality 2) & $50.00_{\pm0.00}$ \\
    $[Z_1; Z_2]$ (concatenation) & $50.00_{\pm0.00}$ \\
    $Z$ (fused) & $\mathbf{77.58}_{\pm0.74}$ \\
    \bottomrule
  \end{tabular}
\end{table}

Table~\ref{tab:unimodal_probes} reports the results. Both unimodal representations and their concatenation remain at chance level, while the fused representation reaches $77.58\%$. The gap does not come from missing attribute information: texture and color are individually recoverable from the corresponding unimodal representation, and the synergy label depends on whether the particular texture-color pairing belongs to $\mathcal{M}$. Recovering the label therefore requires a function that combines the two modalities, which no linear function of $[Z_1; Z_2]$ provides. Since the linear prediction $\hat{Z} = P([Z_1; Z_2])$ is one such function, the concatenation result supports the premise of Hypothesis~\ref{hyp:synergy} independently of how well $P$ is trained.

\subsection{Ablation Experiment: Warmup Epoch Ablation}
\label{appendix:warmup_ablation}

The interaction weight $\lambda_{\text{int}}$ is linearly annealed from zero to its full value over a fixed number of warmup epochs at the start of training, as described in Appendix~\ref{app:implementation_details}. Early in training the linear projector $P$ is randomly initialized, so the residual $Z_{\text{res}} = Z - \hat{Z}$ initially approximates the fused representation itself rather than the component that excludes linearly unimodal-predictable content. The warmup delays the residual loss until $P$ has had time to approach its MSE-optimal solution. To assess the sensitivity of SynCo to this schedule, we sweep \texttt{warmup\_epochs} on Trifeature while holding all other hyperparameters fixed at the values reported in Appendix~\ref{app:implementation_details}.

\begin{figure}[ht]
    \centering
    \includegraphics[width=0.5\linewidth]{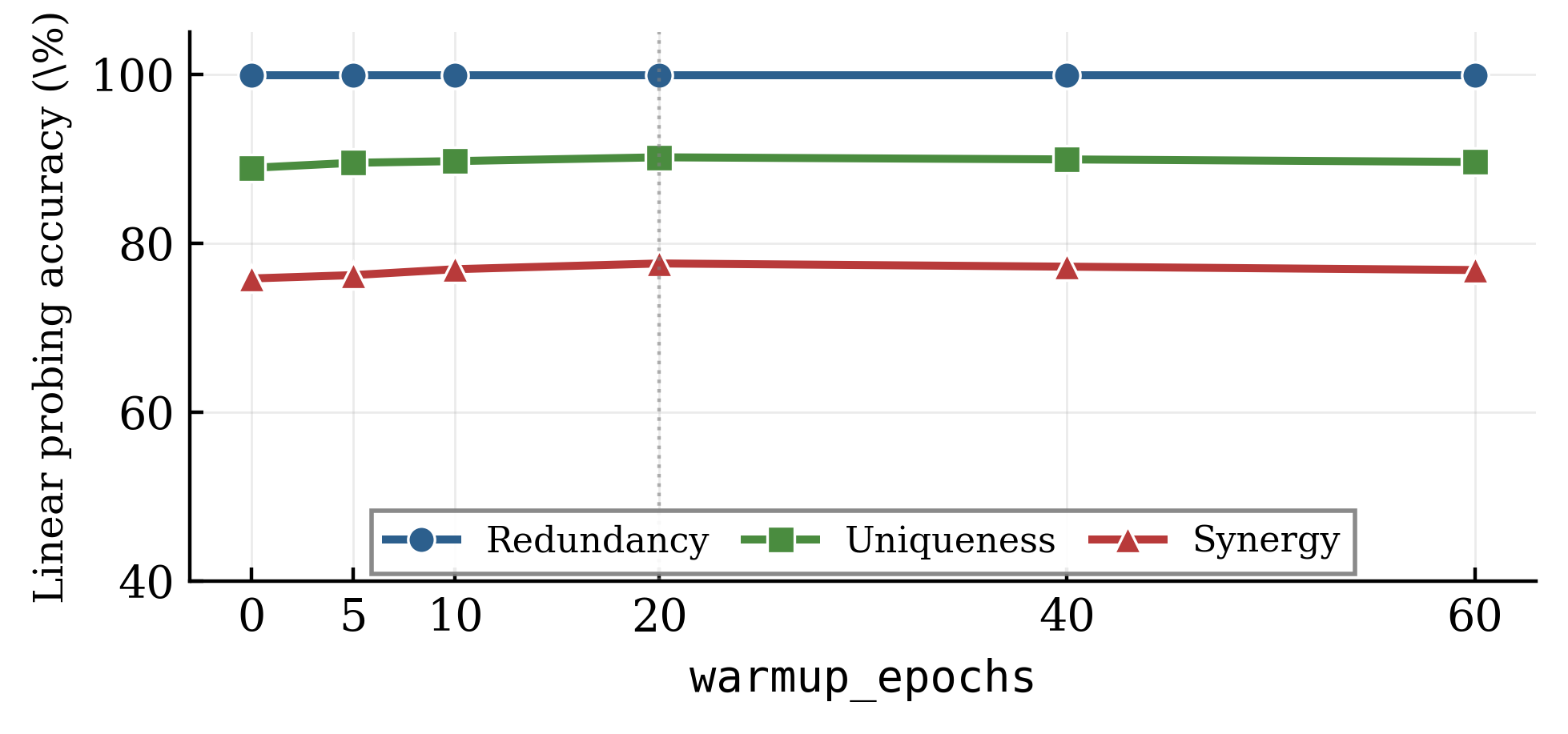}
    \caption{Sensitivity of SynCo to the warmup length on Trifeature. We sweep \texttt{warmup\_epochs} $\in \{0, 5, 10, 20, 40, 60\}$ with all other hyperparameters fixed. Linear probing accuracy (\%) is reported for each PID component, averaged over five independent runs. The dotted line marks the value used for all main-text experiments.}
    \label{fig:warmup_sweep}
\end{figure}

Figure~\ref{fig:warmup_sweep} reports the results. All three PID components remain stable across the sweep, with redundancy unaffected and uniqueness and synergy varying within roughly two points across the full range. Performance peaks at 20 epochs and declines mildly at the endpoints. We use 20 as the default across all experiments.

\section{Appendix Processing Times and Complexity}
\label{app:computational_requirements}

We benchmark the computational cost of SynCo against CoMM and InfMasking to characterize the overhead introduced by the residual decomposition relative to existing multimodal contrastive frameworks. All measurements are obtained on a single GPU with batch size 64, averaged over multiple training steps. Table~\ref{tab:computational} reports per-step time, peak memory, and parameter counts for each method.

\begin{table}[ht]
  \centering
  \footnotesize
  \setlength{\tabcolsep}{6pt}
  \renewcommand{\arraystretch}{0.9}
  \caption{Computational requirements of SynCo, CoMM, and InfMasking. Overhead is reported relative to CoMM. All measurements use batch size 64 on a single GPU.}
  \label{tab:computational}
  \begin{tabular}{@{}lcccc@{}}
    \toprule
    \textit{Method} & \textit{Time/step (ms)} & \textit{Memory (GB)} & \textit{Parameters} & \textit{Overhead} \\
    \midrule
    CoMM & $21.91 \pm 0.23$ & $0.17$ & $448{,}464$ & baseline \\
    SynCo (ours) & $21.94 \pm 0.18$ & $0.17$ & $451{,}704$ & $+0.15\%$ \\
    InfMasking & $57.82 \pm 0.33$ & $0.71$ & $448{,}504$ & $+163.92\%$ \\
    \bottomrule
  \end{tabular}
\end{table}

The results confirm that SynCo's residual decomposition introduces negligible computational cost. Relative to CoMM, SynCo increases per-step time by only $0.15\%$ and adds $3{,}240$ parameters, corresponding to the linear projector that maps concatenated unimodal features to the fused representation space. Memory consumption is essentially unchanged. By contrast, InfMasking incurs substantially higher cost than both CoMM and SynCo, with per-step time more than doubling and peak memory increasing by approximately a factor of four. This overhead reflects the masking strategy of InfMasking, which requires multiple forward passes over masked feature combinations during training. SynCo achieves comparable or superior synergy capture to InfMasking on the Trifeature benchmark and competitive performance across MultiBench and MM-IMDb at a fraction of the computational cost, making it the more efficient choice when training resources are constrained. The overhead of SynCo is bounded by the size of the linear projector, which scales as $O(nD^2)$ in the number of modalities $n$ and the fusion dimension $D$.

\section{Appendix Proofs}
\label{appendix:proofs}

This appendix provides detailed proofs of the theoretical results stated in the main text. Assumption~\ref{ass:augmentation} and Lemmas~\ref{lem:multimodal_mi} and~\ref{lem:unimodal_mi} follow CoMM~\citep{dufumier2024align}, and we prove the two lemmas in the notation of Section~\ref{sec:preliminaries}. Hypothesis~\ref{hyp:synergy} is assessed empirically in Section~\ref{sec:trifeature} and is treated as a premise rather than proven here. Throughout, the augmentations $t$, $t'$, and $t''$ are drawn from $\mathcal{T}$ independently of each other and of $(X, Y)$, so each augmented view depends on $(X, Y)$ only through $X$ and the chains $Y \to X \to X'$ and $Y \to X \to X''$ are Markov. We write sums for discrete variables; the arguments are identical with integrals for continuous variables.

\subsection*{Equivalent Form of Assumption~\ref{ass:augmentation}}

\begin{lemma}
\label{lem:ass_equiv}
Let $X' = t(X)$ with $t \in \mathcal{T}$. The equalities $I(X; X') = I(X; Y)$ and $I(X'; Y) = I(X; Y)$ of Assumption~\ref{ass:augmentation} hold if and only if $X' \perp X \mid Y$ and $Y \perp X \mid X'$.
\end{lemma}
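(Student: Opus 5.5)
The plan is to prove both directions using the Markov chain $Y \to X \to X'$, which holds because $t$ is drawn independently of $(X,Y)$, together with chain-rule expansions of the joint mutual informations $I(X, X'; Y)$ and $I(X; X', Y)$. The key observation is that the two equalities of Assumption~\ref{ass:augmentation} are exactly the equality cases of the data processing inequalities along this chain and its reversal.

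\emph{Second equality.} Expand $I(X, X'; Y)$ in two ways:
\[
I(X, X'; Y) = I(X; Y) + I(X'; Y \mid X) = I(X'; Y) + I(X; Y \mid X').
\]
The Markov chain gives $I(X'; Y \mid X) = 0$, hence
\[
I(X; Y) - I(X'; Y) = I(X; Y \mid X').
\]
So $I(X'; Y) = I(X; Y)$ holds if and only if $I(X; Y \mid X') = 0$, that is, $Y \perp X \mid X'$.

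\emph{First equality.} Expand $I(X; X', Y)$ in two ways:
\[
I(X; X', Y) = I(X; X') + I(X; Y \mid X') = I(X; Y) + I(X; X' \mid Y).
\]
This gives the identity
\[
I(X; X') - I(X; Y) = I(X; X' \mid Y) - I(X; Y \mid X').
\]
We then combine the two equalities.

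\emph{Forward direction.} Assume both equalities hold. The second equality gives $I(X; Y \mid X') = 0$. Substituting this and the first equality into the identity gives $I(X; X' \mid Y) = 0$. Hence $X' \perp X \mid Y$ and $Y \perp X \mid X'$.

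\emph{Converse.} Assume both conditional independences. Then both conditional terms in the identity vanish, so $I(X; X') = I(X; Y)$. The second-equality step above gives $I(X'; Y) = I(X; Y)$.

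The main obstacle is making sure the equivalence is stated for the pair of conditions and not for each condition separately. The first equality alone only forces $I(X; X' \mid Y) = I(X; Y \mid X')$, not that both terms vanish, so the two equalities must be used together. I would also remark that the independence of $t$ from $(X, Y)$ is the only structural input. Finally, I would note that for continuous variables the same chain-rule identities hold with the conditional mutual informations defined via densities, assuming they are finite.
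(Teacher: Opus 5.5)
Your proof is correct and matches the paper's argument essentially step for step: the Markov chain $Y \to X \to X'$ turns the two-way expansion of $I(X, X'; Y)$ into $I(X;Y) - I(X';Y) = I(X; Y \mid X')$, and the two-way expansion of $I(X; X', Y)$ ties the first equality to $I(X; X' \mid Y) = 0$ once $I(X; Y \mid X') = 0$ is known. Your explicit forward and converse directions, and your point that the two equalities must be used jointly, only spell out what the paper's Step 2 does by conditioning on the result of Step 1.
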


\begin{proof}
\textit{Step 1: Label preservation.}
The Markov chain $Y \to X \to X'$ gives $I(X'; Y \mid X) = 0$, and the chain rule expands $I(X, X'; Y)$ in two orders as
\begin{equation}
I(X; Y) \;=\; I(X, X'; Y) \;=\; I(X'; Y) + I(X; Y \mid X').
\label{eq:appx_ass_1}
\end{equation}
The equality $I(X'; Y) = I(X; Y)$ therefore holds if and only if $I(X; Y \mid X') = 0$, that is, $Y \perp X \mid X'$.

\textit{Step 2: Minimality.}
The chain rule expands $I(X; X', Y)$ in two orders as
\begin{equation}
I(X; X') + I(X; Y \mid X') \;=\; I(X; Y) + I(X; X' \mid Y).
\label{eq:appx_ass_2}
\end{equation}
When $I(X; Y \mid X') = 0$, Eq.~\eqref{eq:appx_ass_2} reduces to $I(X; X') - I(X; Y) = I(X; X' \mid Y)$, and the equality $I(X; X') = I(X; Y)$ holds if and only if $I(X; X' \mid Y) = 0$, that is, $X' \perp X \mid Y$.
\end{proof}

The condition $X' \perp X \mid Y$ states that the augmented view carries no information about $X$ beyond the label, and the condition $Y \perp X \mid X'$ states that the view retains all information that $X$ carries about the label.

\subsection*{Proof of Lemma~\ref{lem:multimodal_mi}}

\begin{proof}
Let $Z_\theta = f_\theta(X)$ and $Z'_\theta = f_\theta(X')$ with $X' = t(X)$ for some $t \in \mathcal{T}$.

\textit{Step 1: Upper bounds.}
The pair $(Z_\theta, Z'_\theta)$ is a deterministic function of $(X, X')$, and the data processing inequality together with Assumption~\ref{ass:augmentation} gives
\begin{equation}
I(Z_\theta; Z'_\theta) \;\leq\; I(X; X') \;=\; I(X; Y).
\label{eq:appx_lem1_1}
\end{equation}
By Lemma~\ref{lem:ass_equiv}, $X' \perp X \mid Y$. Since $Z_\theta$ and $Z'_\theta$ are functions of $X$ and $X'$ respectively, $Z'_\theta \perp Z_\theta \mid Y$, and the data processing inequality applied to the chains $Z_\theta \to Y \to Z'_\theta$ and $Y \to X \to Z_\theta$ gives
\begin{equation}
I(Z_\theta; Z'_\theta) \;\leq\; I(Z_\theta; Y) \;\leq\; I(X; Y).
\label{eq:appx_lem1_2}
\end{equation}

\textit{Step 2: Tightness under sufficient expressivity.}
Under sufficient expressivity, the encoder family $\{f_\theta\}$ contains a map that is injective on the supports of $X$ and $X'$, such as the identity, and such a map attains the bound in Eq.~\eqref{eq:appx_lem1_1}. Hence
\begin{equation}
I(Z_{\theta^\star}; Z'_{\theta^\star}) \;=\; \max_{\theta} I(Z_\theta; Z'_\theta) \;=\; I(X; X') \;=\; I(X; Y).
\label{eq:appx_lem1_3}
\end{equation}
Substituting Eq.~\eqref{eq:appx_lem1_3} into Eq.~\eqref{eq:appx_lem1_2} gives $I(X; Y) \leq I(Z_{\theta^\star}; Y) \leq I(X; Y)$, and therefore $I(Z_{\theta^\star}; Y) = I(X; Y)$.
\end{proof}

The first term of Eq.~\eqref{eq:comm} contrasts two augmented views rather than an input and its augmentation. The following lemma shows that the pair $(X', X'')$ satisfies the conditions used in the proof above.

\begin{lemma}
\label{lem:two_views}
Let $X' = t'(X)$ and $X'' = t''(X)$ with $t', t'' \in \mathcal{T}$. Under Assumption~\ref{ass:augmentation}, $X' \perp X'' \mid Y$ and $I(X'; X'') = I(X'; Y) = I(X''; Y) = I(X; Y)$.
\end{lemma}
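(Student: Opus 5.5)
The plan is to apply Lemma~\ref{lem:ass_equiv} to each view separately and then combine the two resulting Markov structures. For $X' = t'(X)$, Lemma~\ref{lem:ass_equiv} gives $X' \perp X \mid Y$ and $Y \perp X \mid X'$, so $I(X'; Y) = I(X; Y)$. The same holds for $X''$, giving $I(X''; Y) = I(X; Y)$. This settles two of the claimed equalities directly.

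For the conditional independence $X' \perp X'' \mid Y$, I will use that $t'$ and $t''$ are drawn independently of each other and of $(X, Y)$. Hence, conditionally on $X$, the views $X'$ and $X''$ are independent, and the joint law factorizes as $p(y)p(x\mid y)p(x'\mid x)p(x''\mid x)$. From $X' \perp X \mid Y$ we have $p(x' \mid x) = p(x' \mid y)$ for every $x$ in the support given $y$, and likewise $p(x'' \mid x) = p(x'' \mid y)$. Substituting and summing over $x$ yields $p(x', x'' \mid y) = p(x' \mid y)\,p(x'' \mid y)$.

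One subtlety is that $p(x'\mid x)$ depends on $x$ alone, while $X'\perp X\mid Y$ makes it constant in $x$ within each fibre of $Y$. Care is needed if $Y$ is not a function of $X$ and fibres overlap. Even then, the equality $p(x'\mid x,y)=p(x'\mid y)$ holds for all $(x,y)$ with positive mass, and this is all the substitution uses.

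For $I(X'; X'') = I(X; Y)$ I will sandwich the quantity. The upper bound comes from the chain $X' \to Y \to X''$, just established, together with the data processing inequality: $I(X'; X'') \le I(X'; Y) = I(X;Y)$. For the lower bound, the chain rule gives
\begin{equation*}
I(X'; X'') \ge I(X'; X'') - I(X'; X'' \mid Y) = I(X'; X''; Y),
\end{equation*}
which is not immediately helpful. A better route is $I(X'; X'') \ge I(Y; X'')$ minus a correction. Concretely, I will expand $I(X', Y; X'') = I(X'; X'') + I(Y; X'' \mid X')$ and show that the last term vanishes.

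Showing $Y \perp X'' \mid X'$ is the main obstacle, since the assumptions only give $Y\perp X\mid X'$. Because $X''$ depends on $(X,Y,X')$ only through $X$ (independent augmentation), the chain $Y \to (X, X') \to X''$ holds, but that alone does not suffice. Instead I will argue differently: $Y \perp X \mid X'$ together with the factorization implies that $(X, X'')$ is independent of $Y$ given $X'$. To see this, compute
\begin{equation*}
p(y, x, x'' \mid x') = p(y\mid x')\,p(x\mid x',y)\,p(x''\mid x).
\end{equation*}
Here $p(x\mid x',y)$ must be shown to equal $p(x\mid x')$; this follows from $Y\perp X\mid X'$ by Bayes. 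Hence $I(Y; X''\mid X') = 0$. Then $I(X'; X'') = I(X', Y; X'') \ge I(Y; X'') = I(X; Y)$, which closes the sandwich.
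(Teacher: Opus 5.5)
Your proof is correct and follows essentially the paper's argument: you use the same factorization $\sum_x p(x\mid y)\,p(x'\mid x)\,p(x''\mid x)$ to get $X'\perp X''\mid Y$, and the same use of $Y\perp X\mid X'$ to make $Y$ independent of one view given the other. The differences are cosmetic: the paper shows $I(X';Y\mid X'')=0$ and reads off $I(X';X'')=I(X';Y)$ from the two-order expansion of $I(X';X'',Y)$, whereas you prove the mirror statement $I(Y;X''\mid X')=0$ and obtain the equality from a DPI upper bound together with a chain-rule lower bound.
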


\begin{proof}
Given $X$, the views $X'$ and $X''$ are independent of each other and of $Y$. By Lemma~\ref{lem:ass_equiv}, $X' \perp X \mid Y$, which together with the Markov chain $Y \to X \to X'$ gives $p(x' \mid x) = p(x' \mid y)$ for every $(x, y)$ in the support. Therefore
\begin{equation}
p(x', x'' \mid y) \;=\; \sum_{x} p(x \mid y)\, p(x' \mid x)\, p(x'' \mid x) \;=\; p(x' \mid y)\, p(x'' \mid y),
\end{equation}
which proves $X' \perp X'' \mid Y$. Lemma~\ref{lem:ass_equiv} applied to $X''$ gives $Y \perp X \mid X''$, and therefore
\begin{equation}
p(y, x' \mid x'') \;=\; \sum_{x} p(x \mid x'')\, p(y \mid x'')\, p(x' \mid x) \;=\; p(y \mid x'')\, p(x' \mid x''),
\end{equation}
which proves $I(X'; Y \mid X'') = 0$. Expanding $I(X'; X'', Y)$ in two orders, as in Eq.~\eqref{eq:appx_ass_2}, gives
\begin{equation}
I(X'; X'') \;=\; I(X'; Y) + I(X'; X'' \mid Y) - I(X'; Y \mid X'') \;=\; I(X'; Y),
\end{equation}
and Assumption~\ref{ass:augmentation} gives $I(X'; Y) = I(X''; Y) = I(X; Y)$.
\end{proof}

With Lemma~\ref{lem:two_views} in place of Lemma~\ref{lem:ass_equiv}, the proof of Lemma~\ref{lem:multimodal_mi} applies to $Z'_\theta$ and $Z''_\theta$. The optimal parameters that maximize $I(Z'_\theta; Z''_\theta)$ therefore satisfy $I(Z'_{\theta^\star}; Z''_{\theta^\star}) = I(Z'_{\theta^\star}; Y) = I(Z''_{\theta^\star}; Y) = I(X; Y) = R + S + U_1 + U_2$, which is the quantity targeted by the first term of Eq.~\eqref{eq:comm}.

\subsection*{Proof of Lemma~\ref{lem:unimodal_mi}}

\begin{proof}
\textit{Step 1: Assumption~\ref{ass:augmentation} for a single modality.}
The selected modality $X_i = t_i(X)$ is a deterministic function of $X$. By Lemma~\ref{lem:ass_equiv}, $X' \perp X \mid Y$ and $Y \perp X \mid X'$, which imply $X' \perp X_i \mid Y$ and $Y \perp X_i \mid X'$. Expanding $I(X_i; X', Y)$ in two orders, as in Eq.~\eqref{eq:appx_ass_2}, gives
\begin{equation}
I(X_i; X') \;=\; I(X_i; Y) + I(X_i; X' \mid Y) - I(X_i; Y \mid X') \;=\; I(X_i; Y) \;=\; R + U_i,
\label{eq:appx_lem2_1}
\end{equation}
where the last equality follows from the consistency equations in Eq.~\eqref{eq:consistency}.

\textit{Step 2: Upper bounds.}
The unimodal representation $Z_i = f_\theta(t_i(X))$ is a function of $X_i$, and the data processing inequality together with Eq.~\eqref{eq:appx_lem2_1} gives
\begin{equation}
I(Z_i; Z') \;\leq\; I(X_i; X') \;=\; R + U_i.
\label{eq:appx_lem2_2}
\end{equation}
Since $X' \perp X_i \mid Y$, the representations satisfy $Z' \perp Z_i \mid Y$, and the data processing inequality applied to the chains $Z_i \to Y \to Z'$ and $Y \to X_i \to Z_i$ gives
\begin{equation}
I(Z_i; Z') \;\leq\; I(Z_i; Y) \;\leq\; I(X_i; Y) \;=\; R + U_i.
\label{eq:appx_lem2_3}
\end{equation}

\textit{Step 3: Tightness under sufficient expressivity.}
As in the proof of Lemma~\ref{lem:multimodal_mi}, a map that is injective on the supports of $X_i$ and $X'$ attains the bound in Eq.~\eqref{eq:appx_lem2_2}, and the optimal parameters $\theta^\star$ that maximize $I(Z_i; Z')$ satisfy $I(Z_i; Z') = R + U_i$. Eq.~\eqref{eq:appx_lem2_3} then holds with equality:
\begin{equation}
I(Z_i; Z') \;=\; I(Z_i; Y) \;=\; I(X_i; Y) \;=\; R + U_i.
\end{equation}
The argument with $Z''$ in place of $Z'$ is identical.
\end{proof}

\paragraph{Relation to CoMM.} CoMM states the unimodal result as the special case of its multimodal lemma in which the augmentation is the projection $t_i$. Under Assumption~\ref{ass:augmentation}, $t_i \in \mathcal{T}$ would require $I(X_i; Y) = I(X; Y)$, which by Eqs.~\eqref{eq:pid} and~\eqref{eq:consistency} holds only when $S = 0$ and $U_j = 0$ for $j \neq i$. The proof above instead transfers Assumption~\ref{ass:augmentation} from $X$ to $X_i$ in Step 1 and does not require $t_i \in \mathcal{T}$.

\subsection*{Proof of Lemma~\ref{lem:info_decomp}}

\begin{proof}
Recall $\hat{Z} = P([Z_1; Z_2])$ and $Z_\mathrm{res} = Z - \hat{Z}$, so $Z = \hat{Z} + Z_\mathrm{res}$ pointwise.

\textit{Step 1: Equivalence under conditioning on $\hat{Z}$.}
Given $\hat{Z}$, the residual $Z_\mathrm{res}$ determines $Z$ via $Z = \hat{Z} + Z_\mathrm{res}$, and conversely $Z$ determines $Z_\mathrm{res}$ via $Z_\mathrm{res} = Z - \hat{Z}$. The two random variables therefore carry the same information given $\hat{Z}$, hence
\begin{equation}
I(Z_\mathrm{res}; Y \mid \hat{Z}) \;=\; I(Z; Y \mid \hat{Z}).
\label{eq:appx_lem3_1}
\end{equation}

\textit{Step 2: Chain rule decomposition.}
The chain rule for mutual information gives
\begin{equation}
I(Z, \hat{Z}; Y) \;=\; I(\hat{Z}; Y) + I(Z; Y \mid \hat{Z}).
\label{eq:appx_lem3_2}
\end{equation}

\textit{Step 3: Bounds on $I(Z, \hat{Z}; Y)$.}
Both $Z = f_\theta(X)$ and $\hat{Z} = P([Z_1; Z_2]) = P([f_\theta(t_1(X)); f_\theta(t_2(X))])$ are deterministic functions of $X$, so the joint $(Z, \hat{Z})$ is a function of $X$. The data processing inequality therefore yields
\begin{equation}
I(Z, \hat{Z}; Y) \;\leq\; I(X; Y).
\label{eq:appx_lem3_3}
\end{equation}
The reverse direction follows from the chain rule, $I(Z, \hat{Z}; Y) = I(Z; Y) + I(\hat{Z}; Y \mid Z)$, and the non-negativity of conditional mutual information:
\begin{equation}
I(Z, \hat{Z}; Y) \;\geq\; I(Z; Y).
\label{eq:appx_lem3_4}
\end{equation}
When $I(Z; Y) = I(X; Y)$, Eqs.~\eqref{eq:appx_lem3_3}--\eqref{eq:appx_lem3_4} give
\begin{equation}
I(Z, \hat{Z}; Y) \;=\; I(Z; Y).
\label{eq:appx_lem3_5}
\end{equation}

\textit{Step 4: Combine.}
Combining Eq.~\eqref{eq:appx_lem3_4} with Eq.~\eqref{eq:appx_lem3_2} and rearranging,
\begin{equation}
I(Z; Y \mid \hat{Z}) \;\geq\; I(Z; Y) - I(\hat{Z}; Y),
\end{equation}
with equality under Eq.~\eqref{eq:appx_lem3_5}. By Eq.~\eqref{eq:appx_lem3_1}, the left side equals $I(Z_\mathrm{res}; Y \mid \hat{Z})$, completing the proof.
\end{proof}

\subsection*{Proof of Theorem~\ref{thm:residual}}

\begin{proof}
Apply Lemma~\ref{lem:info_decomp} to the MSE-optimal projector $P^\star$ with corresponding prediction $\hat{Z}^\star$ and residual $Z^\star_\mathrm{res} = Z - \hat{Z}^\star$:
\begin{equation}
I(Z^\star_\mathrm{res}; Y \mid \hat{Z}^\star) \;\geq\; I(Z; Y) - I(\hat{Z}^\star; Y).
\label{eq:appx_thm1_1}
\end{equation}
The PID decomposition in Eq.~\eqref{eq:pid} and the definition of $\Delta$ give
\begin{equation}
I(Z; Y) \;=\; I(X; Y) - \Delta \;=\; R + S + U_1 + U_2 - \Delta.
\label{eq:appx_thm1_2}
\end{equation}
Hypothesis~\ref{hyp:synergy} bounds the task-relevant content of the linear prediction:
\begin{equation}
I(\hat{Z}^\star; Y) \;\leq\; R + U_1 + U_2.
\label{eq:appx_thm1_3}
\end{equation}
Substituting Eqs.~\eqref{eq:appx_thm1_2}--\eqref{eq:appx_thm1_3} into Eq.~\eqref{eq:appx_thm1_1},
\begin{equation}
I(Z^\star_\mathrm{res}; Y \mid \hat{Z}^\star) \;\geq\; (R + S + U_1 + U_2 - \Delta) - (R + U_1 + U_2) \;=\; S - \Delta.
\label{eq:appx_thm1_4}
\end{equation}
When $\Delta = 0$, Eq.~\eqref{eq:appx_thm1_4} gives $I(Z^\star_\mathrm{res}; Y \mid \hat{Z}^\star) \geq S$, so the residual provides at least $S$ additional task-relevant information given the linear prediction.
\end{proof}

\subsection*{Proof of Corollary~\ref{cor:synergy_contrastive}}

\begin{proof}
Let $X' = t'(X)$ and $X'' = t''(X)$ for $t', t'' \in \mathcal{T}$, with corresponding fused representations $Z' = f_{\theta^\star}(X')$ and $Z'' = f_{\theta^\star}(X'')$, unimodal representations $Z'_i = f_{\theta^\star}(t_i(X'))$ and $Z''_i = f_{\theta^\star}(t_i(X''))$, and residuals
\begin{equation}
Z'_\mathrm{res} \;=\; Z' - P^\star([Z'_1; Z'_2]), \qquad Z''_\mathrm{res} \;=\; Z'' - P^\star([Z''_1; Z''_2]).
\end{equation}

\textit{Step 1: Task-relevant information is preserved across views.}
By Assumption~\ref{ass:augmentation}, $I(X'; Y) = I(X''; Y) = I(X; Y)$, so the total task-relevant information $R + S + U_1 + U_2$ is preserved in both views. By Lemma~\ref{lem:two_views}, the fused representations satisfy $I(Z'; Y) = I(Z''; Y) = I(X; Y)$ at the information-preserving optimum, which gives $\Delta = 0$ in Theorem~\ref{thm:residual} for each view.

\textit{Step 2: Conditional task-relevant information in the residuals.}
By Theorem~\ref{thm:residual} applied to each view,
\begin{equation}
I(Z'_\mathrm{res}; Y \mid \hat{Z}'^\star) \;\geq\; S, \qquad I(Z''_\mathrm{res}; Y \mid \hat{Z}''^\star) \;\geq\; S,
\end{equation}
where $\hat{Z}'^\star$ and $\hat{Z}''^\star$ are the corresponding linear predictions. Given the linear prediction, each residual therefore provides at least $S$ additional task-relevant information.

\textit{Step 3: InfoNCE rewards task-relevant information shared between the residuals.}
The InfoNCE estimator $\hat{I}_{\mathrm{NCE}}(Z'_\mathrm{res}, Z''_\mathrm{res})$ is a lower bound on $I(Z'_\mathrm{res}; Z''_\mathrm{res})$, and maximizing this lower bound rewards information shared between the two residuals. Each residual is a deterministic function of its corresponding augmented view, and $X' \perp X'' \mid Y$ by Lemma~\ref{lem:two_views}. Consequently, $Z'_\mathrm{res} \perp Z''_\mathrm{res} \mid Y$, and the data processing inequality gives
\begin{equation}
I(Z'_\mathrm{res}; Z''_\mathrm{res})
\;\leq\;
\min\left\{
I(Z'_\mathrm{res}; Y),
I(Z''_\mathrm{res}; Y)
\right\}.
\end{equation}
The information shared between the interaction residuals is therefore task-relevant under Assumption~\ref{ass:augmentation}. By Step 2, each residual provides at least $S$ additional task-relevant information given its linear prediction. Applying InfoNCE to the interaction residuals encourages the encoder to preserve shared task-relevant information in the component remaining after subtraction of the linear prediction, providing the motivation for synergy-focused supervision.
\end{proof}

\section{Appendix Discussion of the Augmentation Assumption}
\label{app:assumption}

We discuss the interpretation and feasibility of Assumption~\ref{ass:augmentation}, its role in SynCo, and the effect of an approximate version on the guarantee of Theorem~\ref{thm:residual}.

\paragraph{Interpretation.} Lemma~\ref{lem:ass_equiv} restates Assumption~\ref{ass:augmentation} as two conditional independences. The condition $X' \perp X \mid Y$ requires the augmented view to carry no information about $X$ beyond the label, and the condition $Y \perp X \mid X'$ requires the view to retain all information that $X$ carries about the label. Practical augmentation families satisfy both conditions only approximately. The assumption formalizes the goal of view design in contrastive learning, where augmentations remove nuisance information and preserve task-relevant information~\citep{tian2020makes}, and extends the goal to multimodal augmentations that are jointly defined on $(X_1, X_2)$.

\paragraph{Feasibility.} CoMM examines the feasibility of the assumption by varying the augmentation strength on Trifeature and MM-IMDb~\citep[Appendix~C.4]{dufumier2024align}. Downstream performance improves as stronger augmentations remove nuisance information and declines once the augmentations also remove task-relevant information. The interior optimum corresponds to the regime that Assumption~\ref{ass:augmentation} describes.

\paragraph{Role in SynCo.} Assumption~\ref{ass:augmentation} underlies Lemmas~\ref{lem:multimodal_mi} and~\ref{lem:unimodal_mi}, which characterize the optimum of $\mathcal{L}_{\text{MCL}}$ in Eq.~\eqref{eq:comm}. CoMM, InfMasking~\citep{wen2025infmasking}, and SynCo all train with $\mathcal{L}_{\text{MCL}}$, so the assumption applies equally to all three methods and does not differentiate SynCo from them. Lemma~\ref{lem:info_decomp} and the bound of Theorem~\ref{thm:residual} do not require Assumption~\ref{ass:augmentation}; the assumption enters only through $\Delta = 0$ in Theorem~\ref{thm:residual}. The premise specific to SynCo is Hypothesis~\ref{hyp:synergy}, which concerns the trained encoder and the linear projector rather than the augmentation family and is assessed empirically in Table~\ref{tab:decomp}.

\paragraph{Approximate label preservation.} When the augmentations preserve task-relevant information only approximately, $\Delta$ can be positive, and Theorem~\ref{thm:residual} still gives $I(Z^\star_\mathrm{res}; Y \mid \hat{Z}^\star) \geq S - \Delta$.

\section{Appendix Broader Impact}
\label{app:broader_impact}

SynCo is a method for self-supervised multimodal representation learning that improves the capture of synergistic information across modalities. As foundational research on contrastive learning objectives, the work is not tied to a specific application or deployment, but its downstream applications span both beneficial and potentially sensitive domains.

\paragraph{Potential positive impacts.} By improving the quality of representations learned from unlabeled multimodal data, SynCo can reduce the labeling burden in domains where annotations are expensive or require expert knowledge. The benchmarks evaluated in this paper highlight several such domains: clinical decision support from physiological time-series and tabular records (MIMIC), human activity recognition from wearable sensors (DARai), and robotic manipulation from vision and force-torque feedback (Vision\&Touch). Improved synergy capture is particularly valuable in these settings, where the most informative cues often emerge only from the joint observation of multiple modalities. SynCo also introduces negligible computational overhead relative to existing contrastive frameworks (Appendix~\ref{app:computational_requirements}), lowering the energy cost and hardware barrier to applying multimodal self-supervised learning compared to more compute-intensive alternatives.

\paragraph{Potential negative impacts.} As with any general-purpose representation learning method, SynCo could in principle be applied in settings with negative consequences, such as surveillance applications. The method is presented as a general-purpose training objective rather than a deployable system, and we do not release any pretrained model that constitutes a direct path to such applications.

\paragraph{Fairness and reliability.} Models trained with SynCo inherit the properties of their training data and backbone encoders. In high-stakes domains, deployment should follow established validation and evaluation protocols, which are independent of the training objective.

\section{Appendix Dataset Details}
\label{sec:datasets}

\subsection{Trifeature}

The Trifeature benchmark~\citep{hermann2020shapes} was originally introduced for studying how convolutional networks represent visual factors of variation. Each image contains a single shape rendered with a chosen texture and a chosen color, and shape, texture, and color each take one of ten values, giving $1{,}000$ unique (shape, texture, color) combinations. We assign $800$ combinations to training and reserve the remaining $200$ for evaluation. Every training combination is rendered three times, with the shape and texture orientations sampled independently from $[-45^\circ, 45^\circ]$. The shape occupies a $128 \times 128$ region placed at a uniformly random location inside a $224 \times 224$ image canvas so that the region lies entirely within the canvas, and the texture and color are then applied. We form bimodal examples by pairing two such images, producing $10{,}000$ training pairs and $4{,}096$ test pairs from the same distribution. Section~\ref{sec:trifeature} describes how each pair is labeled to obtain the redundancy, uniqueness, and synergy probes.

\subsection{MultiBench}

We use five datasets from the MultiBench benchmark~\citep{liang2021multibench} and follow the preprocessing released with the benchmark in every case. All datasets are anonymized at source and contain no personally identifiable information.

\paragraph{MIMIC.} The MIMIC dataset~\citep{johnson2016mimic} comprises de-identified critical-care records gathered between 2001 and 2012, totaling $53{,}423$ admissions from $38{,}597$ distinct patients. We use the two modalities released by MultiBench: a time-series modality of hourly $12$-dimensional physiological measurements taken over the first $24$ hours of an admission, and a static modality of $5$-dimensional patient demographics such as age and gender. Consistent with~\citep{liang2023factorized}, the target is a binary indicator of whether an admission falls under ICD-9 code group~$7$ ($460$-$519$), corresponding to diseases of the respiratory system.

\paragraph{MOSI.} MOSI~\citep{zadeh2016multimodal} is a sentiment analysis benchmark built from $2{,}199$ short YouTube monologue clips, each accompanied by video frames, audio, and a transcript of the spoken content. The original labels are continuous sentiment scores in $[-3, 3]$, which we binarize into positive and negative classes consistent with~\citep{liang2023factorized}. Our bimodal experiments use the visual and textual modalities.

\paragraph{UR-FUNNY.} UR-FUNNY~\citep{hasan2019ur} is a benchmark for humor detection built from $1{,}866$ TED talks, providing $16{,}514$ short video segments with aligned audio and transcripts, each labeled as humorous or not. Our bimodal experiments use the visual and textual modalities.

\paragraph{MUSTARD.} MUSTARD~\citep{castro2019towards} is a sarcasm detection benchmark comprising utterances from popular television shows, including \textit{Friends}, \textit{The Big Bang Theory}, and \textit{The Golden Girls}. We use the balanced split of $690$ utterances, with every utterance providing synchronized video, audio, and transcript along with a binary sarcasm label. Our bimodal experiments use the visual and textual modalities.

\paragraph{Vision\&Touch.} Vision\&Touch~\citep{lee2020making} records $150$ trajectories of $1{,}000$ time steps each, captured from a 7-DoF Franka Panda robot performing a peg-insertion task. Every step provides an RGB image, a depth map, force readings, and the end-effector position and velocity. Consistent with the regression setting in MultiBench~\citep{liang2021multibench}, we predict the next-step end-effector position under mean-squared error. Our bimodal experiments use the visual and proprioceptive modalities.

\subsection{DARai}
DARai~\citep{kaviani2025hierarchical} is a multimodal benchmark for human activity recognition, providing over $200$ hours of synchronized recordings from $50$ participants in $10$ indoor environments. The full release contains $20$ sensor streams, including multiple camera views, depth and radar sensors, wearable inertial measurement units (IMU), forearm electromyography (EMG), instrumented insoles, biomonitors, and a gaze tracker. Annotations are organized in a three-level hierarchy spanning high-level activities, lower-level actions, and fine-grained procedures. We use DARai to assess whether the gains of SynCo extend beyond the vision-language regime that dominates the other benchmarks. The wearable subset operates on physical signals with markedly different temporal dynamics and information content from natural images and text.

Each wearable modality captures a distinct physical aspect of human activity. The wearable IMU records three-axis accelerometer, gyroscope, and magnetometer signals at $12$~Hz and reflects the kinematics of the upper body, including limb orientation, acceleration, and rotation. The forearm EMG records muscle activity at $4$~kHz and carries information about exerted effort that is largely invisible to a purely kinematic sensor, since the same motion can arise from very different muscular loads. The instrumented insoles record eight-point pressure at $500$~Hz and capture weight transfer, balance, and ground-contact patterns that the upper-body sensors cannot observe. The wearable subset thus pairs signals that differ in physical origin, body location, and temporal scale.

To assess whether the gains of SynCo depend on the choice of modality pairing, we evaluate three bimodal configurations: IMU + Insole, IMU + EMG, and EMG + Insole. The IMU + Insole pairing combines arm-mounted kinematics with foot-mounted pressure. Activities requiring coordinated upper- and lower-body motion, such as climbing stairs or carrying an object, can only be inferred by integrating both signals. The IMU + EMG pairing relates gross kinematics to muscular effort and exposes activities where the same motion arises from different effort profiles, such as lifting a heavy versus a light object. The EMG + Insole pairing combines forearm and foot signals from physically separated body regions, requiring the model to relate cues across distant sensors rather than rely on a single body segment. The three configurations share the same preprocessing and backbone networks, as reported in Table~\ref{tab:darai}.

\subsection{MM-IMDb}
MM-IMDb~\citep{arevalo2017gated} is a multilabel benchmark for movie genre classification, with $25{,}959$ films described by a poster image and a plot summary. The genre annotations span $23$ categories, and a single film often belongs to several of them, which makes the task multilabel rather than multiclass. We use the poster and plot modalities. Although MM-IMDb is included in MultiBench~\citep{liang2021multibench}, we report it separately because we train directly on the raw modalities rather than on the preprocessed feature vectors that MultiBench provides.


\end{document}